%% file: iclr2027_conference.tex
\documentclass{article} 
\usepackage{iclr2027_conference,times}

\iclrfinalcopy

\input{math_commands.tex}

\usepackage{hyperref}
\usepackage{url}
\usepackage{booktabs}

\definecolor{linkred}{RGB}{180,40,40}
\definecolor{linkblue}{RGB}{25,85,165}

\hypersetup{
    colorlinks=true,
    linkcolor=linkred,
    citecolor=linkblue,
    urlcolor=linkblue
}

\usepackage[table]{xcolor}
\usepackage{tcolorbox}
\tcbuselibrary{skins,breakable}
\usepackage{tocloft}

\definecolor{titlebgcolor}{RGB}{82,140,182}
\definecolor{textbgcolor}{RGB}{219,232,241}

\definecolor{tablegreen}{RGB}{46,125,50}

\newtcolorbox{infobox}[1][]{
    colframe=titlebgcolor!60,
    colback=textbgcolor!29,
    fontupper=\normalfont,
    #1
}

\newtcolorbox{titlebox}[2][]{
    colframe=titlebgcolor!60,
    colback=textbgcolor!29,
    colbacktitle=titlebgcolor!25,
    coltitle=black,
    title={#2},
    fontupper=\normalfont,
    fonttitle=\bfseries,
    arc=2pt,
    left=6pt,
    right=6pt,
    top=4pt,
    bottom=4pt,
    toptitle=3pt,
    bottomtitle=3pt,
    before skip=6pt,
    after skip=6pt,
    #1
}

\usepackage{amsmath,amssymb,amsthm}

\newtheorem{proposition}{Proposition}
\newtheorem{corollary}{Corollary}
\newtheorem{lemma}{Lemma}

\usepackage{algorithm}
\usepackage{algpseudocode}

\title{Does Every User Need a Private LoRA? \\ Decoupling Personalization from Per-User Adaptation}

\author{
Songyuan Sui\textsuperscript{1,2}\thanks{
Work done during an internship at Samsung Semiconductor, US. Corresponding author.}
\quad
Srikanth Malla\textsuperscript{1}
\quad
Chiho Choi\textsuperscript{1}
\quad
Joon Hee Choi\textsuperscript{1}
\\[1mm]
\textsuperscript{1}Samsung Semiconductor, US
\qquad
\textsuperscript{2}Rice University
\\
\texttt{Songyuan.Sui@rice.edu}
\\
\texttt{\{srikanth.m, chiho1.choi, jh4.choi\}@samsung.com}
}

\begin{document}

\maketitle

\fancyhead{}
\lhead{\textit{Preprint}}

\begin{abstract}
Personalized large language models often require a complete adaptation state for each user. However, this paradigm scales poorly as the user population grows. We revisit this design through the lens of \emph{personalization capacity allocation}: how much adaptation capacity can be shared across users, how that shared capacity should be composed, and how much adaptation must remain user-specific. We answer these questions through three complementary empirical analyses. We find that independent user adapters contain substantial cross-user reusable structure, that the utility of reusable directions reflects both user relevance and variation across queries, and that user histories provide transferable signals for compact individual correction. Motivated by these findings, we propose \textbf{LINEUP}. It learns a bank of reusable low-rank personalization factors, composes them through user-conditioned recall and query-dependent calibration, and restricts target-user adaptation to a tiny user code over a shared correction space. This design decouples expressive personalization capacity from per-user trainable state. Each target user optimizes only eight scalars, while all shared components remain fixed. By comparison, the evaluated private-LoRA configuration uses 4.19 million per-user parameters. Our theoretical analysis gives a finite-step, finite-history risk bound and sufficient conditions for user-code refinement to improve on history initialization. Across six tasks spanning personalized classification, prediction, and generation, LINEUP leads on all 12 metrics, each averaged over three independent runs (e.g., reducing LaMP-3 RMSE by 11.4\% relative to the strongest baseline). It maintains advantages under limited history. These results show that rich personalization can be supported primarily by reusable, conditionally composed shared capacity, while independent user adaptation remains confined to a tiny correction state.
\end{abstract}

\addtocontents{toc}{\protect\setcounter{tocdepth}{-1}}

\section{Introduction}
\label{sec:introduction}

Large language models (LLMs) can acquire task-specific knowledge and behaviors through adaptation \citep{JMLR:v25:23-0870, wang2025parameter}. However, such adaptation is typically learned from data pooled across users and produces a single model shared by the population. It captures what users have in common within a task, but not how their preferences differ. For example, the same input may require different outputs for different users. Personalized Large Language Models (PLLMs) therefore seek to adapt shared models to individual preferences inferred from user histories or feedback \citep{liu2025surveypersonalizedlargelanguage}. Recently, PLLMs have received attention in scenarios across text generation, conversation, multimodal generation, and recommender systems \citep{trajanovski-etal-2021-text, zhao-etal-2025-personalens, 10.1145/3696410.3714843, 10.1145/3773966.3777954}.

Existing PLLM methods broadly differ in how user information is converted into personalized behavior. Tuning-free approaches incorporate user histories through summaries, retrieval, or inference-time steering \citep{richardson2023integratingsummarizationretrievalenhanced, 10.1145/3626772.3657783, lv2026costeercollaborativedecodingtimepersonalization}. Their effectiveness depends on the quality of history-derived signals. Learned adaptation-based methods instead encode personalization into parameters or representations. One direction performs \textbf{individualized adaptation construction}: OPPU directly optimizes a user-specific PEFT module \citep{tan-etal-2024-democratizing}, PerFit retains a personalized representation intervention alongside a collective shift \citep{ICLR2026_7a1ac10e}, and P2P uses a population-trained hypernetwork to generate complete personalized adapters from user information \citep{tan-etal-2026-instant}. Another direction improves reuse through \textbf{user-derived reusable components}, such as pieces, group-level modules, or anchor-derived PEFT modules \citep{tan-etal-2024-personalized, zhang-etal-2025-proper, li-etal-2026-mta}. These methods improve how individualized adaptation is obtained or reused, but largely retain user-associated adaptation modules as the starting point for organizing personalization. A detailed review of related work is provided in Appendix~\ref{app:related_work}.

We instead view personalized adaptation through the lens of \emph{personalization capacity allocation}. Rather than starting from a complete individualized adaptation state, we ask what personalization capacity can be learned directly for cross-user reuse, how this shared capacity should be composed for each request, and how much still must be independently estimated for a target user. This leads to the central question of our work: \textbf{How much of a personalized adapter is actually personal?}

We investigate this question through three complementary empirical analyses. \textbf{1.} Independently learned user adapters contain substantial cross-user reusable structure (Figure~\ref{fig:main_analysis}(a)). \textbf{2.} The functional utility of reusable directions exhibits both persistent user association (Figure~\ref{fig:main_analysis}(b)) and substantial variation across queries from the same user. \textbf{3.} User histories provide transferable individual correction signals (Figure~\ref{fig:main_analysis}(c)) that can be preferentially retained in compact shared spaces. Together, these findings suggest organizing personalization around shared capacity, conditional composition, and compact individual correction.

Motivated by these findings, we propose \textbf{LINEUP}, which decouples personalization capacity from per-user adaptation. LINEUP directly learns fine-grained low-rank personalization factors across all source-training users as reusable population-level capacity, rather than deriving reusable components from independently learned user adapters. A target user's history recalls a set of relevant factors. Each query from the same user then calibrates their sparse composition. Remaining individual variation is handled by shared correction maps controlled by a single eight-dimensional user code. The code is initialized from the user's history and refined using history-derived support examples while all shared components remain fixed. Therefore, rich personalization capacity is learned predominantly through reusable shared structure, while target-user optimization is confined to \textbf{only eight scalars}. By comparison, the evaluated OPPU configuration uses 4.19 million per-user parameters. To characterize this compact adaptation, we derive a finite-step, finite-history risk bound for user-code refinement, linking shared-space approximation and history initialization to fitting progress and support--query disagreement.

Across six tasks spanning personalized classification, prediction, and generation, LINEUP leads on all 12 reported metrics, with each reported score averaged over three independent runs. Under cold-start settings, LINEUP maintains clear advantages as the available user history is limited. The scalability analysis shows low incremental per-user storage as the user population grows.

In summary, our main contributions are as follows:
\begin{itemize}
    \item We systematically study personalized adaptation from a personalization capacity allocation perspective and identify three properties: cross-user reuse, user- and query-dependent composition, and transferable individual correction.

    \item We derive a finite-step, finite-history risk bound for regularized user-code refinement and establish sufficient conditions for improvement over history initialization.
    
    \item We propose \textbf{LINEUP}, which decouples personalization capacity from per-user adaptation. It achieves the best mean performance on various personalization tasks while reducing parameter and storage growth, with further benefits under cold-start settings.
\end{itemize}

\section{Problem Setup and Preliminaries}
\label{sec:prelinimary}

\subsection{Personalization Problem Setup}
\label{sec:personalized_problem_setup}

We study each personalized task $\mathcal{T}$ independently, with input space $\mathcal{X}_{\mathcal{T}}$ and output space $\mathcal{Y}_{\mathcal{T}}$. Let $\mathcal{U}_{\mathrm{src}}$ and $\mathcal{U}_{\mathrm{tgt}}$ denote disjoint sets of source and target users. Source users provide data for learning population-shared personalization components, while target users are unseen during shared training and are personalized only from their own available history.

Each user $u$ has an available history $\mathcal{H}_u=\{h_{u,i}\}_{i=1}^{n_u}$ and held-out queries $\mathcal{Q}_u=\{(x_{u,j},y_{u,j})\}_{j=1}^{m_u}$. Following LaMP and OPPU \citep{salemi-etal-2024-lamp, tan-etal-2024-democratizing}, history records are input--output pairs. Personalization may use $\mathcal{H}_u$ and the current query $x$, but never its target output $y$.

Let $\theta_{\mathcal{T}}$ denote the task-adapted LLM backbone, $\phi_{\mathcal{T}}$ the population-shared personalization parameters, and $\omega_u$ the target-user-specific adaptation state. We write personalized prediction abstractly as
$p_u(y\mid x,\mathcal{H}_u)
=
p_{\theta_{\mathcal{T}},\phi_{\mathcal{T}}}
(y\mid x,\mathcal{H}_u,\omega_u)$. This notation separates shared personalization capacity from the state that must be independently adapted for a target user, while allowing the available history and current query to directly condition prediction.

When target-user optimization is required, task-specific support examples $\mathcal{S}_u$ are constructed from the available history. We denote by $n_u^{\mathrm{sup}}=|\mathcal{S}_u|$ the number of support
examples used for fitting. It need not equal the number of history records used for history conditioning. Under limited-history evaluation, all target-user-dependent operations only use the currently available history, including history conditioning and support construction. The held-out evaluation queries remain fixed.

\subsection{Adaptation Preliminaries}
\label{sec:lora_preliminaries}

\noindent\textbf{Parameter-space adaptation.} LoRA \citep{hu2021loralowrankadaptationlarge} augments a frozen linear transformation with a low-rank update. At an adapted site $i\in\mathcal{I}$, a user-specific LoRA induces the effective update $D_{u,i}=s_iB_{u,i}A_{u,i}$, where
$A_{u,i}\in\mathbb{R}^{r_{\mathrm{L}}\times d_{\mathrm{in},i}}$, $B_{u,i}\in\mathbb{R}^{d_{\mathrm{out},i}\times r_{\mathrm{L}}}$, $r_{\mathrm{L}}\ll
\min(d_{\mathrm{in},i},d_{\mathrm{out},i})$, and $s_i$ is a scaling coefficient. OPPU independently fits these
user-specific updates on a common task-adapted LLM backbone, after which the same fitted update is reused across all that user's queries. Our parameter-space analyses operate on the effective updates $D_{u,i}$, rather than on a particular factorization into $A_{u,i}$ and $B_{u,i}$.

\noindent\textbf{Representation-space correction.}
We also study personalization directly in representation space. Let $h_{\ell,t}\in\mathbb{R}^{d}$ denote the hidden representation at layer $\ell$ and token position $t$. An additive intervention takes the form
$h'_{\ell,t}=h_{\ell,t}+a_{\ell,t}v_\ell$, where
$v_\ell\in\mathbb{R}^{d}$ is a correction vector and $a_{\ell,t}$ specifies its positional scope. For a fixed reference model and example $z$, let $\mathcal{L}_{\mathcal{T}}(z;v_\ell)$ denote the negative log-likelihood under this intervention, and define the first-order loss-reducing direction as
$g_z^\ell=
-\left.\nabla_{v_\ell}
\mathcal{L}_{\mathcal{T}}(z;v_\ell)\right|_{v_\ell=0}$.
Because $v_\ell$ is unrestricted, this view provides a diagnostic of the representation-space correction indicated by the example's loss gradient, before imposing a low-dimensional user parameterization. Section~\ref{sec:analysis} specifies the concrete intervention and normalization used in the corresponding empirical analysis.

\begin{figure}[t]
    \centering
    \begin{minipage}[t]{0.37\linewidth}
        \centering
        \includegraphics[height=3.6cm]{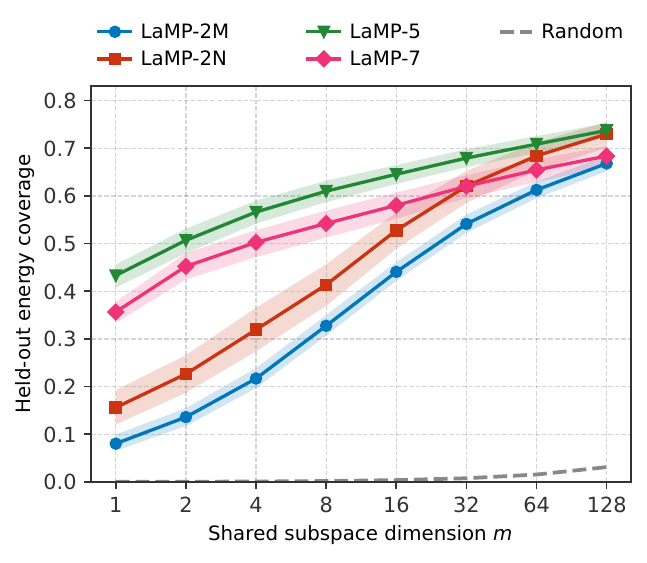}\\[-1mm]
        (a) Shared-subspace coverage
    \end{minipage}\hfill%
    \begin{minipage}[t]{0.25\linewidth}
        \centering
        \includegraphics[height=3.6cm]{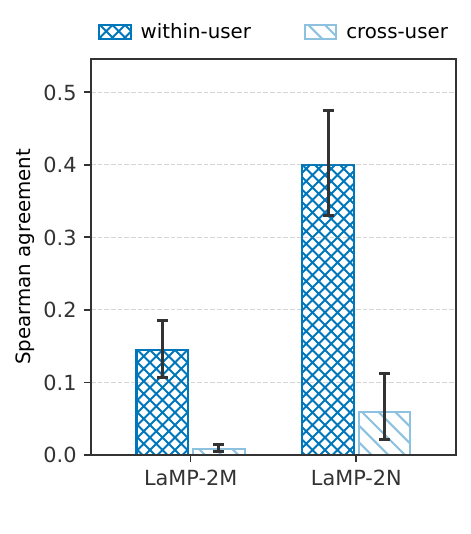}\\[-1mm]
        (b) Rank agreement
    \end{minipage}\hfill%
    \begin{minipage}[t]{0.35\linewidth}
        \centering
        \includegraphics[height=3.6cm]{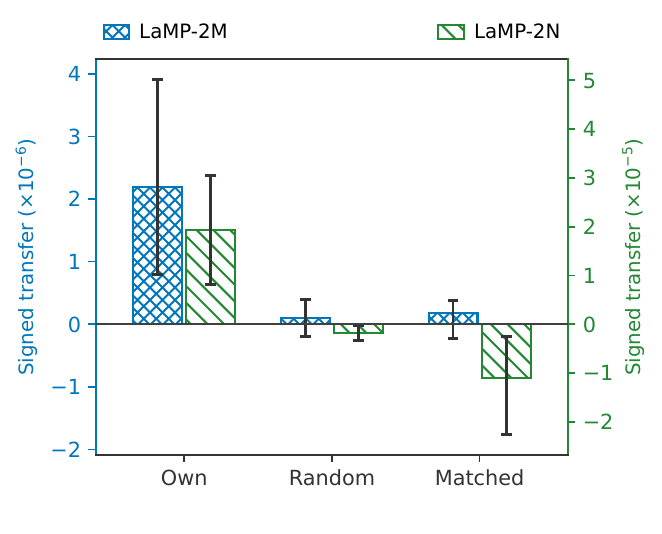}\\[-1mm]
        (c) Support-to-query transfer
    \end{minipage}

    \caption{Empirical evidence for personalization capacity allocation.
    (a) Small shared subspaces capture substantial held-out adapter-update energy.
    (b) Direction-utility rankings are substantially more consistent within users than across users.
    (c) A user's own history yields positive transferable correction, unlike random or label-matched donor histories (separate $y$-axis scales).}
    \label{fig:main_analysis}
\end{figure}

\section{Empirical Analysis of Personalized Adaptation}
\label{sec:analysis}

We examine how personalization capacity can be shared, composed, and individually corrected. RQ1 and RQ2 analyze independently optimized OPPU adapters \citep{tan-etal-2024-democratizing} on a fixed LLM backbone; RQ3 studies representation corrections without user adapters. Unless stated otherwise, intervals are 95\% user-level bootstrap confidence intervals. See Appendix~\ref{app:analysis} for detailed mathematical definitions and supporting evidence. Dataset details are provided in Section~\ref{sec:experimental_setup}. The design choices informed by these analyses are evaluated through ablations in Section~\ref{sec:ablation}.

\subsection{RQ1: How Much Structure Is Reusable across User Adapters?}
\label{sec:analysis_sharing}

\begin{infobox}
\phantomsection\label{obs:sharing}
\textbf{Observation 1 (Sharing).}
Independently trained user adapters share substantial structure: a small subspace learned from other users captures much of held-out update energy.
\end{infobox}

We here learn common output directions from one group of users and test how much of other users' adapter-update energy they capture. We compare these directions with random subspaces of the same dimension on LaMP-2M, LaMP-2N, LaMP-5, and LaMP-7 datasets.

Figure~\ref{fig:main_analysis}(a) shows that 32 directions (0.8\% of the output dimension) capture 54.1\%--67.9\% of held-out update energy, versus 0.78\% for random subspaces. Removing task-adapter output directions barely changes coverage. Size-matched random-partition bases also capture substantial energy, but genuine-user bases cover 14.3--21.0 percentage points more on the same updates (Appendix~\ref{app:analysis_sharing}). These findings motivate jointly learning reusable personalization factors rather than duplicating shared structure in user adapters.

\subsection{RQ2: How Does Direction Utility Vary across Users and Queries?}
\label{sec:analysis_composing}

\begin{infobox}
\phantomsection\label{obs:composing}
\textbf{Observation 2 (Composing).}
The utility of reusable directions exhibits persistent user-associated structure and substantial variation across queries from the same user.
\end{infobox}

Are reusable directions equally useful for every request? On LaMP-2M and LaMP-2N, we keep each user's adapter fixed and measure the first-order change in query loss induced by strengthening each of eight shared directions. Comparing these signed responses within and across users tests whether their utility reflects recurring user patterns, request-specific needs, or both.

Figure~\ref{fig:main_analysis}(b) shows higher within-user than cross-user Spearman agreement: 0.145 versus 0.008 on LaMP-2M and 0.400 versus 0.059 on LaMP-2N. This within-user advantage holds across all analyzed layers and projections (Appendix~\ref{app:rq2_agreement}). Yet differences between a user's queries still account for 82.7\% and 61.3\% of response variation after controlling for overall sensitivity scale (Appendix~\ref{app:rq2_variation}).
Direction-removal checks further show that the sensitivities closely track actual loss changes (Appendix~\ref{app:rq2_removal}). Together, these results suggest a hierarchical technique where history-conditioned candidate recall is followed by query-dependent calibration.

\subsection{RQ3: Can User History Guide Compact Individual Correction?}
\label{sec:analysis_correcting}

\begin{infobox}
\phantomsection\label{obs:correcting}
\textbf{Observation 3 (Correcting).}
User histories provide transferable correction after removing average label-response directions, and compact shared spaces preferentially retain it.
\end{infobox}

We now ask whether correction inferred from a user's history transfers to new queries, measured by alignment with their first-order loss-reducing directions. On LaMP-2M and LaMP-2N, we control shared label structure by removing the span of average label responses from both history and query correction signals. We compare own history with random and label-matched donor histories and evaluate compact spaces learned only from other users' histories (Appendix~\ref{app:analysis_correcting}).

Figure~\ref{fig:main_analysis}(c) shows positive own-history transfer, with intervals excluding zero and means above those of both donor conditions. Own-history point estimates are positive at all eight analyzed layers on both tasks (Appendix~\ref{app:rq3_transfer}). First-order intervention checks support the loss-based interpretation of this alignment (Appendix~\ref{app:rq3_validation}). An eight-dimensional space per layer retains 34.7\% and 29.4\% of full-space own-history transfer on LaMP-2M and LaMP-2N, respectively, while capturing only 12.9\% and 17.0\% of residual query-correction energy (Figure~\ref{fig:appendix_analysis_3}(a)). Compact spaces therefore preserve a larger fraction of useful transfer than of correction energy. This motivates shared correction maps with compact, history-fitted user coordinates, complementing conditional factor composition.

\section{Methodology: LINEUP}
\label{sec:method}

\begin{figure}[t]
    \centering
    \includegraphics[width=0.98\linewidth]{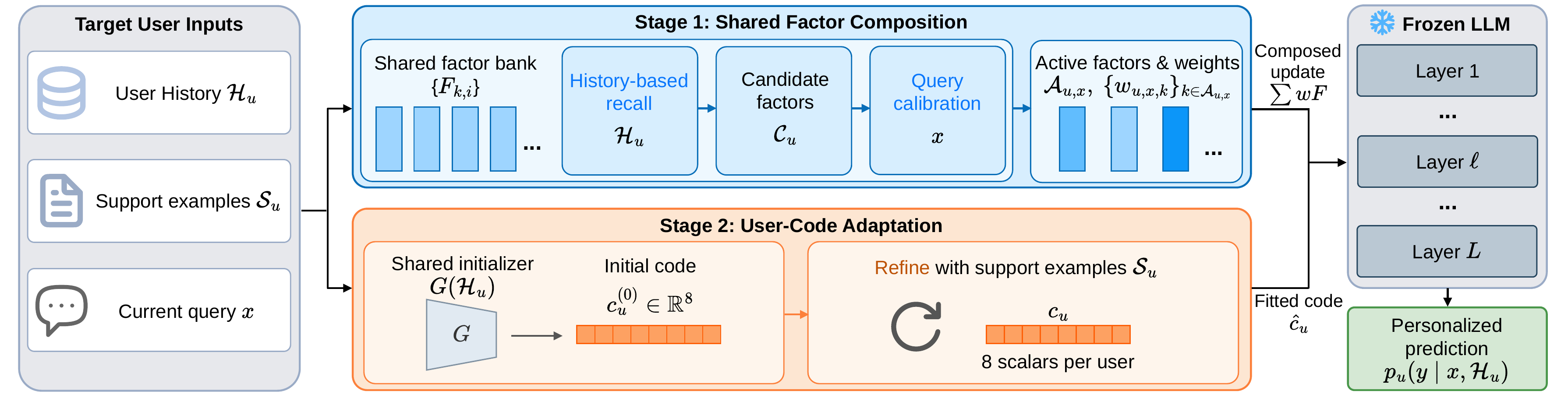}
    \caption{Overview of LINEUP.
    Stage 1 combines shared personalization factors
    (Section~\ref{sec:shared_factors}) through history-conditioned
    recall and query-dependent calibration (Section~\ref{sec:conditional_composition}).
    Stage 2 initializes an eight-dimensional user code from history
    and refines only this code on support examples,
    with Stage 1 active and all shared components fixed
    (Section~\ref{sec:tiny_user_codes}).}
    \label{fig:lineup}
\end{figure}

Motivated by findings in Section~\ref{sec:analysis}, we propose \textbf{LINEUP} (\textbf{LI}ghtweight perso\textbf{N}alization via r\textbf{EU}sable factors and tiny \textbf{P}er-user codes), which decouples personalization capacity from per-user adaptation. LINEUP comprises two stages (S1 and S2). Figure~\ref{fig:lineup} illustrates the architecture.

\begin{titlebox}{LINEUP
\hfill\hyperref[alg:lineup]{[Algorithm~\ref*{alg:lineup}]}}
\noindent\textbf{Stage 1 (S1): Shared-factor composition.}
User history recalls candidate factors, and the current query calibrates their sparse composition. Let $\mathcal A_{u,x}$ denote the active factors and $w_{u,x,k}$ their mixture weights. At each adapted linear map $i$, shared factors $F_{k,i}$ yield
\begin{equation}
D_{u,x,i}
=
\sum_{k\in\mathcal A_{u,x}}w_{u,x,k}F_{k,i}.
\label{eq:lineup_composition}
\end{equation}

\noindent\textbf{Stage 2 (S2): Tiny user-code adaptation.}
We refine a history-initialized code on support examples from the user history, with Stage 1 active and all shared parameters fixed. The fitted code $\hat c_u$ controls shared correction maps $V_\ell$ through
\begin{equation}
h'_{\ell,t}
=
h_{\ell,t}+a_{\ell,t}V_\ell\hat c_u.
\label{eq:lineup_correction}
\end{equation}
Here $h_{\ell,t}$ is the hidden state at layer $\ell$ and token $t$, and $a_{\ell,t}$ selects correction positions. One eight-dimensional code is reused across correction layers and the user's queries.
\end{titlebox}

\subsection{Fine-Grained Shared Personalization Factors}
\label{sec:shared_factors}

Observation \hyperref[obs:sharing]{1} motivates changing the unit of personalization from complete user adapters to reusable factors. LINEUP jointly learns its factor bank across all source-training users under conditional composition, rather than extracting components from separately trained user adapters. Each factor can serve multiple users without representing
any one user in full. This allows factors to specialize as complementary personalization units, while overlapping subsets produce distinct personalized updates (Appendix~\ref{app:lineup_composition}).

\subsection{Hierarchical History--Query Composition}
\label{sec:conditional_composition}

Observation \hyperref[obs:composing]{2} suggests separating user relevance from request-specific utility. Let $\tilde e_u,\tilde e_x\in\mathbb R^{d_f}$ denote fixed features of the available history and current query, obtained from frozen token embeddings (Appendix~\ref{app:lineup_composition}). For a bank of $K$ factors, learned shared maps $W_H,W_Q\in\mathbb R^{K\times d_f}$ implement history-conditioned recall followed by query calibration:
\begin{equation}
\begin{aligned}
s_u&=W_H\tilde e_u,
&\mathcal C_u&=\operatorname{TopK}(s_u,M_c),\\
z_{u,x}&=s_u+\lambda W_Q\tilde e_x,
&\mathcal A_{u,x}&=\operatorname{TopK}_{\mathcal C_u}(z_{u,x},M_a).
\end{aligned}
\label{eq:lineup_routing}
\end{equation}
Here $M_c$ and $M_a$ are the candidate and active set sizes, with $M_a\leq M_c < K$, and $\lambda$ controls query strength. $\operatorname{TopK}$ returns factor indices, and its subscript restricts selection to $\mathcal C_u$. History scores guide both candidate recall and query-calibrated composition. Softmax over active scores gives the mixture weights $w_{u,x,k}$ in Equation~\ref{eq:lineup_composition}, with zero weights outside $\mathcal A_{u,x}$. When $M_a=M_c$, query calibration reweights the recalled factors; when $M_a<M_c$, it can also change the active subset. S1 provides request-dependent personalization for unseen users without target-user gradient updates.

\subsection{History-Initialized Adaptation with Tiny User Codes}
\label{sec:tiny_user_codes}

Different users can draw on the same personalization factors while requiring different combinations of them. LINEUP therefore places its main personalization capacity in shared factor composition, while a tiny user code addresses individual differences that this composition does not fully capture. Observation \hyperref[obs:correcting]{3} motivates sharing correction capabilities while adapting only their user-specific coordinates. Shared maps learn how to correct representations. A single eight-dimensional code
specifies the user's correction. It is initialized from a population-informed starting point, and support fitting adjusts it using individual evidence. The same fitted code coordinates corrections across layers and is reused across queries, rather than introducing independently learned user matrices (Appendix~\ref{app:lineup_code_adaptation}).

\noindent\textbf{Learning shared capacity.}
We first train S1 across source users, then freeze it and learn the shared correction maps and initializer using initialized codes. S1 remains active during both shared S2 training and target-user code fitting. At personalization time, only the code is updated. The LLM backbone remains frozen throughout (Algorithm~\ref{alg:lineup}).

\subsection{Theoretical Analysis of Compact User Adaptation}
\label{sec:compact_user_theory}

We analyze S2's regularized, finite-step Adam fitting with S1 active and all shared parameters fixed. Under the conditions in Appendix~\ref{app:s2_theory}, our risk bound relates shared-space approximation and history-initialization error to fitting progress and support--query disagreement. It gives a sufficient condition for support fitting to improve on initialization, while accounting for history reuse (Corollary~\ref{cor:s2_finite_history}).

A deterministic bound on code displacement supports this finite-step analysis without requiring convergence (Lemma~\ref{lem:s2_adam_radius}). The risk bound separates shared-space approximation from code-space estimation, while accounting for shared geometry and stability. A complementary quadratic analysis explains when remaining correction need outweighs estimation noise (Corollary~\ref{cor:s2_effective_dimension}). Together, these results support learning rich correction capabilities jointly while keeping individual fitting compact.

\section{Experiments}
\label{sec:experiments}

\subsection{Experimental Setup}
\label{sec:experimental_setup}

\noindent\textbf{Datasets and metrics.} We evaluate LINEUP on six tasks of the LaMP personalization benchmark \citep{salemi-etal-2024-lamp}, spanning classification, rating prediction, and generation. We follow OPPU's data processing and target-user protocol, keeping source and target users disjoint. We report Accuracy and Macro-F1 for LaMP-2M/2N, mean absolute error (MAE) and root mean square error (RMSE) for LaMP-3, and ROUGE-1/ROUGE-L for generation. Lower is better for MAE and RMSE; higher is better otherwise. Dataset and metric details appear in Appendix~\ref{app:datasets}--\ref{app:metrics}.

\noindent\textbf{Baselines.} We compare LINEUP against eight baselines. \textbf{Non} uses the pretrained LLM backbone without task adaptation or personalization. \textbf{Basic} applies task adaptation without per-user personalization. \textbf{RAG} augments the task-adapted model's input with retrieved user history. \textbf{OPPU} \citep{tan-etal-2024-democratizing} learns a private adapter for each user, while \textbf{PerFit} \citep{ICLR2026_7a1ac10e} learns collective and user-specific representation shifts. \textbf{Per-Pcs} \citep{tan-etal-2024-personalized} assembles reusable pieces of user adapters. \textbf{P2P} \citep{tan-etal-2026-instant} trains a shared
hypernetwork across source users to generate adapters from user profiles and retrieved history. \textbf{MTA} \citep{li-etal-2026-mta} merges anchor-user adapters before individual refinement. Details appear in Appendix~\ref{app:baseline_impl}.

\noindent\textbf{Implementation details.} We use Llama-2-7B as the base model to align with most baselines' released code. LINEUP builds on OPPU's non-personalized task-adapted backbone and learns shared
components separately. We report mean $\pm$ standard deviation across three seeds. Implementation details and hyperparameters are provided in Appendix~\ref{app:implementation}--\ref{app:hyperparameters}.

\begin{table}[t]
\setlength{\tabcolsep}{3.6pt}
\renewcommand{\arraystretch}{1.12}

\caption{Performance on LaMP-2M/2N/3.
\textbf{Acc} = Accuracy, \textbf{F1} = Macro-F1, \textbf{MAE} = mean absolute error, \textbf{RMSE} = root mean square error.
\underline{Underline} and \textbf{bold} denote the second-best and best mean results. $\Delta$ reports absolute improvements over the strongest baseline, with error reductions shown as positive. LINEUP leads on all metrics across classification and prediction.}
\label{tab:main_results_2n_2m_3}
\begin{center}
\begin{tabular}{@{}lcccccc@{}}
\toprule
& \multicolumn{2}{c}{\textbf{LaMP-2M}}
& \multicolumn{2}{c}{\textbf{LaMP-2N}}
& \multicolumn{2}{c}{\textbf{LaMP-3}} \\
\cmidrule(lr){2-3}
\cmidrule(lr){4-5}
\cmidrule(lr){6-7}

\textbf{Method}
& \textbf{Acc} $\uparrow$
& \textbf{F1} $\uparrow$
& \textbf{Acc} $\uparrow$
& \textbf{F1} $\uparrow$
& \textbf{MAE} $\downarrow$
& \textbf{RMSE} $\downarrow$ \\
\midrule

Non
& $.032 \pm .001$
& $.035 \pm .002$
& $.355 \pm .002$
& $.120 \pm .006$
& $1.342 \pm .016$
& $2.005 \pm .022$ \\

Basic
& $.470 \pm .001$
& $.423 \pm .001$
& $.783 \pm .002$
& $.536 \pm .002$
& $.221 \pm .009$
& $.491 \pm .009$ \\

RAG
& $.552 \pm .000$
& $.479 \pm .002$
& $.817 \pm .001$
& $\underline{.594} \pm .001$
& $.216 \pm .009$
& $.495 \pm .009$ \\

OPPU
& $.619 \pm .000$
& $.515 \pm .001$
& $.801 \pm .001$
& $.556 \pm .002$
& $.181 \pm .009$
& $.441 \pm .011$ \\

PerFit
& $\underline{.632} \pm .000$
& $\underline{.519} \pm .001$
& $\underline{.818} \pm .000$
& $.589 \pm .002$
& $\underline{.172} \pm .000$
& $\underline{.440} \pm .000$ \\

Per-Pcs
& $.503 \pm .001$
& $.458 \pm .002$
& NA
& NA
& $.217 \pm .000$
& $.490 \pm .000$ \\

P2P
& $.521 \pm .005$
& $.472 \pm .006$
& $.804 \pm .003$
& $.563 \pm .004$
& $.215 \pm .005$
& $.488 \pm .004$ \\

MTA
& $.526 \pm .005$
& $.478 \pm .005$
& NA
& NA
& $.204 \pm .005$
& $.468 \pm .004$ \\

\rowcolor{cyan!10}
\textbf{LINEUP}
& $\mathbf{.645} \pm .003$
& $\mathbf{.539} \pm .002$
& $\mathbf{.822} \pm .005$
& $\mathbf{.621} \pm .003$
& $\mathbf{.152} \pm .003$
& $\mathbf{.390} \pm .003$ \\

\rowcolor{cyan!10}
$\Delta$
& $\textbf{\textcolor{tablegreen}{+0.013}}$
& $\textbf{\textcolor{tablegreen}{+0.020}}$
& $\textbf{\textcolor{tablegreen}{+0.004}}$
& $\textbf{\textcolor{tablegreen}{+0.027}}$
& $\textbf{\textcolor{tablegreen}{+0.020}}$
& $\textbf{\textcolor{tablegreen}{+0.050}}$ \\

\bottomrule
\end{tabular}
\end{center}
\end{table}

\begin{table}[t]
\setlength{\tabcolsep}{3.6pt}
\renewcommand{\arraystretch}{1.12}

\caption{Performance on LaMP-4/5/7.
\textbf{R-1} = ROUGE-1, \textbf{R-L} = ROUGE-L. \underline{Underline} and \textbf{bold} denote the second-best and best mean results. $\Delta$ reports absolute improvements over the strongest baseline. LINEUP leads on all metrics across generation tasks.}
\label{tab:main_results_4_5_7}
\begin{center}
\begin{tabular}{@{}lcccccc@{}}
\toprule
& \multicolumn{2}{c}{\textbf{LaMP-4}}
& \multicolumn{2}{c}{\textbf{LaMP-5}}
& \multicolumn{2}{c}{\textbf{LaMP-7}} \\
\cmidrule(lr){2-3}
\cmidrule(lr){4-5}
\cmidrule(lr){6-7}

\textbf{Method}
& \textbf{R-1} $\uparrow$
& \textbf{R-L} $\uparrow$
& \textbf{R-1} $\uparrow$
& \textbf{R-L} $\uparrow$
& \textbf{R-1} $\uparrow$
& \textbf{R-L} $\uparrow$ \\
\midrule

Non
& $.137 \pm .000$
& $.120 \pm .000$
& $.156 \pm .015$
& $.128 \pm .012$
& $.131 \pm .001$
& $.127 \pm .001$ \\

Basic
& $.186 \pm .000$
& $.161 \pm .001$
& $.477 \pm .006$
& $.413 \pm .007$
& $.520 \pm .004$
& $.471 \pm .006$ \\

RAG
& $.198 \pm .001$
& $.175 \pm .000$
& $.504 \pm .006$
& $.444 \pm .004$
& $.561 \pm .004$
& $.514 \pm .002$ \\

OPPU
& $.216 \pm .000$
& $.192 \pm .001$
& $.535 \pm .011$
& $.466 \pm .010$
& $\underline{.564} \pm .003$
& $\underline{.522} \pm .003$ \\

PerFit
& $.207 \pm .001$
& $.183 \pm .001$
& $.524 \pm .001$
& $.446 \pm .002$
& $.526 \pm .007$
& $.474 \pm .008$ \\

Per-Pcs
& $.187 \pm .001$
& $.174 \pm .000$
& $.482 \pm .004$
& $.427 \pm .005$
& $.522 \pm .001$
& $.473 \pm .001$ \\

P2P
& $.213 \pm .001$
& $.190 \pm .001$
& $.518 \pm .009$
& $.453 \pm .009$
& $.548 \pm .005$
& $.499 \pm .002$ \\

MTA
& $\underline{.218} \pm .001$
& $\underline{.195} \pm .000$
& $\underline{.537} \pm .007$
& $\underline{.475} \pm .005$
& NA
& NA \\

\rowcolor{cyan!10}
\textbf{LINEUP}
& $\mathbf{.222} \pm .001$
& $\mathbf{.197} \pm .001$
& $\mathbf{.564} \pm .003$
& $\mathbf{.494} \pm .004$
& $\mathbf{.594} \pm .004$
& $\mathbf{.551} \pm .006$ \\

\rowcolor{cyan!10}
$\Delta$
& $\textbf{\textcolor{tablegreen}{+0.004}}$
& $\textbf{\textcolor{tablegreen}{+0.002}}$
& $\textbf{\textcolor{tablegreen}{+0.027}}$
& $\textbf{\textcolor{tablegreen}{+0.019}}$
& $\textbf{\textcolor{tablegreen}{+0.030}}$
& $\textbf{\textcolor{tablegreen}{+0.029}}$ \\

\bottomrule
\end{tabular}
\end{center}
\end{table}

\subsection{Main Results on LaMP}
\label{sec:main_results}

Tables~\ref{tab:main_results_2n_2m_3} and~\ref{tab:main_results_4_5_7} show that \textbf{LINEUP achieves the best mean performance on all 12 metrics across six tasks}. Representative gains include a $0.020$ F1 improvement over PerFit on LaMP-2M, a $0.050$ RMSE reduction on LaMP-3, and a $0.029$ improvement in ROUGE-L over OPPU on LaMP-7. While the strongest baseline varies across tasks, LINEUP consistently leads across tasks.

\noindent\textbf{Effectively organizing shared capacity around reusable factors.} OPPU and PerFit retain individual adaptation states, while Per-Pcs and MTA reuse components derived from user adapters. Our comparison also includes cross-user, history-conditioned learning through P2P's population-trained hypernetwork. \hyperref[sec:analysis_sharing]{RQ~1} indicates that separate states repeatedly devote limited capacity to common structure. LINEUP jointly learns factors for
user- and query-conditioned composition, allowing overlapping subsets to serve different users and requests without retraining. Its distinction lies in organizing shared capacity around reusable factors while restricting independent adaptation to compact correction. The consistent gains support this factor-centric capacity-allocation strategy as a whole, combining shared learning, conditional composition, and compact individual refinement.

\noindent\textbf{Reserving individual adaptation for compact correction.} \hyperref[obs:correcting]{RQ~3} reveals that user histories provide transferable correction signals that compact shared spaces can preferentially retain. Section~\ref{sec:compact_user_theory} shows how finite-step fitting can improve history initialization when its progress outweighs support--query disagreement, with statistical complexity controlled in the user-code space. LINEUP therefore places most modeling capacity in shared structures and reserves a lightweight code for targeted individual correction. The results support this division of labor: shared capacity and compact individual correction together yield stronger personalization without a full private adaptation state per user.

\subsection{Cold-Start Personalization}
\label{sec:few_history}

\begin{figure}[t]
    \centering

    \begin{minipage}[t]{0.32\linewidth}
        \centering
        \includegraphics[width=\linewidth]{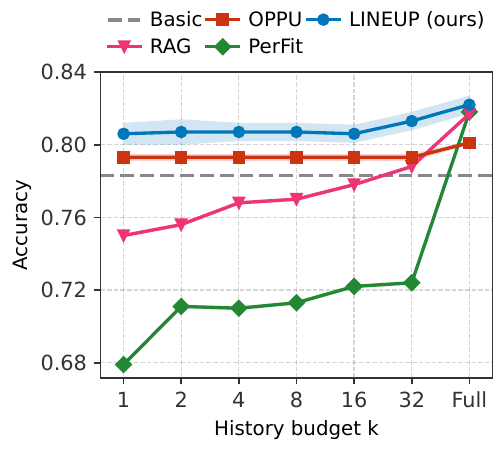}\\[-1mm]
        (a) LaMP-2N Accuracy
    \end{minipage}\hfill%
    \begin{minipage}[t]{0.32\linewidth}
        \centering
        \includegraphics[width=\linewidth]{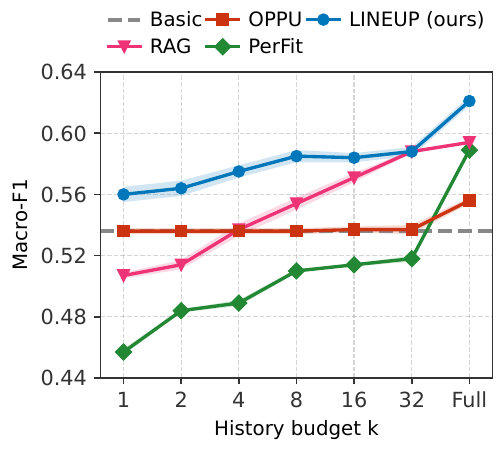}\\[-1mm]
        (b) LaMP-2N Macro-F1
    \end{minipage}\hfill%
    \begin{minipage}[t]{0.32\linewidth}
        \centering
        \includegraphics[width=\linewidth]{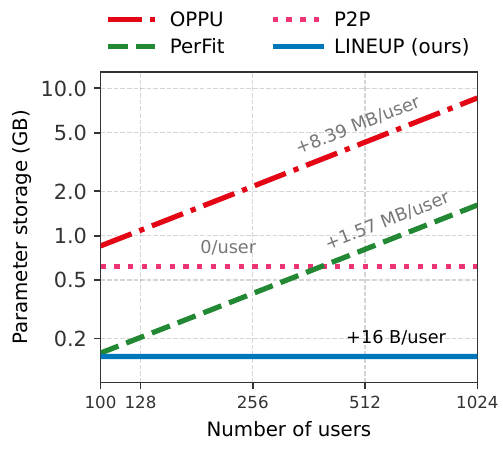}\\[-1mm]
        (c) LaMP-2N storage
    \end{minipage}

    \caption{Cold-start personalization and adaptation-storage scaling.
    (a) LINEUP maintains high classification accuracy.
    (b) Additional history improves Macro-F1 from an already strong cold-start result.
    (c) LINEUP has the lowest total adaptation storage over the displayed user range.
    }
    \label{fig:coldstart_efficiency}
\end{figure}

Cold-start scenarios are common in real-world PLLM deployment, where personalization must begin with limited user history. We therefore evaluate how effectively LINEUP personalizes unseen target users as the amount of available history is restricted. We vary the available history of unseen target users while keeping users and evaluation queries fixed. On LaMP-2N, LINEUP achieves the best mean Accuracy and the best or tied-best reported Macro-F1 across history budgets (Figure~\ref{fig:coldstart_efficiency}(a--b)). With only one history record, it reaches $0.806$ Accuracy and $0.560$ Macro-F1, compared with OPPU's $0.793/0.536$ at the same budget. Its Macro-F1 increases to $0.621$ with full history, showing that strong cold-start performance still benefits from additional user evidence. The detailed protocol and complementary generation task results on LaMP-4 appear in Appendix~\ref{app:coldstart_results}.

\noindent\textbf{Rich shared capacity from minimal history.} Our factor bank learns reusable personalization patterns across source users before target-user adaptation. Available history provides cues for recalling relevant factors, rather than requiring these patterns to be relearned for each user. It indicates that our shared factors already supply much of the needed personalization. The tiny user code then addresses residual individual differences not fully captured by factor composition. Concentrating independent estimation on this correction also aligns with the finite-history risk analysis in Section~\ref{sec:compact_user_theory}.

\subsection{Efficiency and Scalability}
\label{sec:efficiency}

Most benchmark tasks in our evaluation involve about 100 target users, whereas real-world PLLM services may serve far larger populations. Adaptation storage thereby becomes a vital constraint for multi-adapter deployment \citep{DBLP:journals/rts/AbdelzaherHKKMRTWWY25}. Figure~\ref{fig:coldstart_efficiency}(c) shows analytical storage scaling for the evaluated LaMP-2N configurations, normalized to bf16: LINEUP adds only 16\,B per user. LINEUP's total adaptation storage becomes lower than OPPU's at 17 users
and PerFit's at 95 users. Both thresholds are very small for large-scale services, where LINEUP's storage savings over these baselines grow with the user population. At 1,024 users, the projected totals, including shared parameters, are approximately 151.2\,MB for LINEUP, 1.61\,GB for PerFit, and 8.60\,GB for OPPU. Accounting details and comparisons with Per-Pcs and MTA appear in Appendix~\ref{app:efficiency}.

\noindent\textbf{Amortizing shared capacity across users.} In OPPU, each private adapter must accommodate both reusable patterns and individual variation, while overlapping structure is repeatedly learned and stored across users (\hyperref[sec:analysis_sharing]{RQ~1}). LINEUP instead invests in a larger factor bank jointly learned across source users, providing more capacity to capture diverse reusable personalization patterns. Only residual individual correction requires an eight-dimensional code. This organization explains how richer personalization and lower storage can coexist. Capacity is thus expanded where it can be shared, rather than replicated in every user module. MTA also reuses components but retains large individualized adapters, leaving substantial per-user storage growth (Figure~\ref{fig:coldstart_efficiency_supp}(c)).

\noindent\textbf{Compact per-request activation.} History-conditioned recall and query calibration allow each request to draw on only a small part of the shared bank. Each request activates four rank-4 factors from the 32-factor bank, yielding a combined update of rank at most 16 at each adapted projection. The larger bank therefore expands the range of reusable personalization patterns without requiring its full capacity to be activated for every request.

\begin{table}[t]
\setlength{\tabcolsep}{4.2pt}
\renewcommand{\arraystretch}{1.12}

\caption{Ablation results on LaMP-2M and LaMP-5. \textbf{Bold} denotes the best mean.}
\label{tab:ablation}
\begin{center}
\begin{tabular}{@{}lcccc@{}}
\toprule
& \multicolumn{2}{c}{\textbf{LaMP-2M}}
& \multicolumn{2}{c}{\textbf{LaMP-5}} \\
\cmidrule(lr){2-3}
\cmidrule(lr){4-5}

\textbf{Method}
& \textbf{Acc} $\uparrow$
& \textbf{F1} $\uparrow$
& \textbf{R-1} $\uparrow$
& \textbf{R-L} $\uparrow$ \\
\midrule

\rowcolor{cyan!10}
\textbf{Full LINEUP (S1 + S2)}
& $\mathbf{.645} \pm .003$
& $\mathbf{.539} \pm .002$
& $\mathbf{.564} \pm .003$
& $\mathbf{.494} \pm .004$ \\

S1 + initialized code
& $.639 \pm .002$
& $.533 \pm .001$
& $.555 \pm .000$
& $.490 \pm .001$ \\

S1 only
& $.634 \pm .005$
& $.530 \pm .002$
& $.549 \pm .003$
& $.485 \pm .001$ \\

S1 w/o query calibration
& $.585 \pm .008$
& $.494 \pm .007$
& $.523 \pm .002$
& $.461 \pm .002$ \\

S1 w/ random candidates
& $.497 \pm .023$
& $.419 \pm .030$
& $.492 \pm .012$
& $.436 \pm .011$ \\

\bottomrule
\end{tabular}
\end{center}
\end{table}

\subsection{Ablation Studies}
\label{sec:ablation}

We conduct an ablation study across representative tasks (LaMP-2M and LaMP-5) to validate our three design choices of LINEUP: history-guided factor retrieval, query-conditioned composition, and the tiny user-specific correction. The ablation protocol is provided in Appendix~\ref{app:ablation_protocol}. For the two routing ablations, S1-only is the reference. \emph{Random candidates} replaces history-guided recall with a fixed random subset of the same size for each user, while preserving history- and query-based scoring within the candidate set. \emph{No query calibration} removes the query contribution during training and inference while retaining history-conditioned recall. To examine S2, we consider three variants. \emph{S1 only} omits the entire correction mechanism, while \emph{S1 + initialized code} applies history-initialized correction without support fitting. Full LINEUP uses the same shared checkpoints and further refines the code on history-derived support examples.

Table~\ref{tab:ablation} shows that random candidates cause the largest degradation relative to S1-only, while removing query calibration also reduces performance on both tasks. Full LINEUP outperforms S1-only, supporting S2 as a whole. On all four metrics, shared correction maps with history-based initialization improve mean performance over S1-only, and support fitting adds further gains. These results support both routing choices within the factor bank and refinement beyond initialized correction. A code-dimension sweep further shows that increasing $r_c$ from 2 to 8
improves mean performance, while larger codes yield no consistent additional gains across metrics (Appendix~\ref{app:code_dimension_sensitivity}).

\section{Conclusion and Future Work}

We revisited personalized adaptation through the lens of personalization capacity allocation and conducted three empirical analyses. Our findings motivated LINEUP, which directly learns fine-grained reusable factors across users, composes them using user history and the current query, and confines user-specific fitting to a compact residual correction. Our theoretical analysis bounds the risk of finite-step code refinement under limited history. Across six LaMP tasks, LINEUP achieves the best mean performance while optimizing only eight scalars per target user, with clear advantages under limited history and low incremental parameter and storage costs. These results show that rich personalization need not require a full private adapter for every user. Shared learning and conditional composition provide rich personalization capacity, while independent user adaptation is reserved for compact individual correction. Future work will extend this task-isolated formulation to cross-task personalization, capturing stable user preferences through reusable factors.

\subsection*{AI use statement}

In this work, we used generative AI tools, including ChatGPT (\url{https://chatgpt.com/}) and Claude (\url{https://claude.ai/}), to
assist with polishing the writing. Specifically, we used these tools to polish text, check grammar, and correct formatting errors throughout the writing process. We have carefully reviewed all AI-assisted work to ensure there is no falsehood, plagiarism, or misrepresentation. We take responsibility for the final content of this work, including any text, claims or artifacts produced with the aid of generative AI.

\subsection*{Ethics statement}

Personalized large language models raise privacy, bias, and resource concerns because they rely on user-specific histories and adaptation states. First, LINEUP replaces complete private adapters with eight-dimensional user
codes, substantially reducing the persistent user-specific model state that must be stored and protected. This compact design can simplify personalized-parameter management. User histories, codes, and shared components still require appropriate access controls and privacy safeguards. Second, history-based personalization may reflect biases or manipulation in user data. LINEUP's separation of shared components and individual codes offers a basis for targeted auditing. Output review and transparent user controls can help address undesirable personalization and support responsible use. Third, LLM training and inference entail computational and environmental costs. By reusing shared personalization capacity rather than maintaining a
complete adapter for every user, LINEUP substantially reduces per-user parameter and storage requirements, supporting more resource-efficient personalization as the user population grows.

\bibliography{iclr2027_conference}
\bibliographystyle{iclr2027_conference}


\clearpage
\appendix

\addtocontents{toc}{\protect\setcounter{tocdepth}{2}}

\renewcommand{\contentsname}{Appendix Contents}
\tableofcontents
\thispagestyle{fancy}

\clearpage

\input{Appendix/A_Related_Work}

\input{Appendix/B_Additional_Empirical_Analysis}

\input{Appendix/C_Methodology_Details}

\input{Appendix/D_Theoretical_Analysis_and_Proofs}

\input{Appendix/E_Supplementary_Materials_for_Experimental_Setup}

\input{Appendix/F_Supplementary_Experimental_Results}

\end{document}

%% file: math_commands.tex
\usepackage{amsmath,amsfonts,bm}

\def\eqref#1{equation~\ref{#1}}

\def\1{\bm{1}}

\DeclareMathAlphabet{\mathsfit}{\encodingdefault}{\sfdefault}{m}{sl}
\SetMathAlphabet{\mathsfit}{bold}{\encodingdefault}{\sfdefault}{bx}{n}



%% file: Appendix/A_Related_Work.tex
\section{Related Work}
\label{app:related_work}

\subsection{Personalized Large Language Models}
\label{app:related_work_pllm}

Large language models (LLMs) have demonstrated strong capabilities across language understanding and generation \citep{10.1145/3735632, chen-etal-2024-reconcile, NEURIPS2024_0356216f, sui-etal-2025-chain, le-cong-etal-2025-llms, 10.1145/3774904.3792376}. Personalized large language models adapt model behavior to individual users based on their histories, profiles, or preference feedback \citep{liu2025surveypersonalizedlargelanguage}. Existing methods can be broadly grouped into tuning-free, learned adaptation-based, and alignment approaches.

\noindent\textbf{Tuning-free personalization.} Tuning-free methods typically summarize or retrieve user histories to construct personalized prompts or augment the model context
\citep{richardson2023integratingsummarizationretrievalenhanced, qiu-etal-2025-measuring, 10.1145/3626772.3657783, zerhoudi2026personaragenhancingretrievalaugmentedgeneration}. Other approaches steer generation at inference time by contrasting personalized and non-personalized model states or combining decoding signals \citep{zhang-etal-2025-personalized, lv2026costeercollaborativedecodingtimepersonalization}. These methods avoid target-user parameter optimization and can readily incorporate newly available histories. However, their effectiveness depends on the quality of historical data preprocessing.

\noindent\textbf{Learned adaptation-based personalization.}
This is a complementary family that encodes personalization into learned parameters or representation-space interventions. We organize them according to the basic adaptation unit from which target-user personalization is constructed.

The first direction performs \textbf{individualized adaptation construction}. OPPU directly optimizes an independent PEFT module for each user by using their history \citep{tan-etal-2024-democratizing}. PerFit decomposes representation-space personalization into collective and personalized shifts while retaining a user-specific intervention for individual variation \citep{ICLR2026_7a1ac10e}. P2P changes how such individualized adaptation is obtained by introducing a population-trained hypernetwork. It maps a user profile gained during preprocessing and the user's query-retrieved history to a complete set of personalized LoRA parameters. P2P thereby avoids iterative per-user fine-tuning at deployment \citep{tan-etal-2026-instant}. Despite these differences, the primary personalization object remains an individualized adaptation module that can be optimized directly, instantiated in representation space, or generated from user information. Methods that fit and retain such modules incur user-specific optimization and persistent-state costs, while hypernetwork generation shifts part of this burden to preprocessing stages. Thus, these approaches mainly improve how an individualized adaptation state is obtained, rather than changing the granularity at which personalization capacity itself is organized.

The second direction improves reuse through \textbf{user-derived reusable components}. Per-Pcs learns personalized PEFT modules for selected users, decomposes them into reusable pieces, and assembles pieces for target users \citep{tan-etal-2024-personalized}. PROPER introduces intermediate group-level adaptation and combines group LoRAs with individual user LoRAs \citep{zhang-etal-2025-proper}. MTA constructs a bank of anchor-derived Meta-LoRAs, retrieves and merges relevant modules for a target user, and additionally fits an individual LoRA for few-shot refinement \citep{li-etal-2026-mta}. These approaches demonstrate the value of cross-user reuse. Nevertheless, their reusable units are obtained from adaptation modules associated with particular users. Their emphasis is on reusing such modules, rather than systematically characterizing what structure is reusable. This leaves open a different organization of personalization capacity: whether reusable units can instead be learned \emph{directly} as cross-user personalization factors and composed according to the current request.

\noindent\textbf{Personalized alignment.} Personalized alignment forms a complementary line of research that typically models personalization through heterogeneous objectives or preference signals. User preferences can be incorporated through personalized reward modeling, parameter merging, or decoding-time policy combination \citep{liu2025surveypersonalizedlargelanguage, NEURIPS2023_e12a3b98, NEURIPS2024_57c89126, NEURIPS2025_2fbc1ea6, ICLR2026_c0ae4874}. These methods usually assume explicit preference feedback or a predefined preference space. By contrast, our setting focuses on task-specific personalization inferred from heterogeneous behavioral histories, where user-specific behavior must be recovered from prior interactions rather than supplied as an explicit alignment objective.

\noindent\textbf{Positioning of our work.} We take \emph{personalization capacity allocation} as the organizing principle. The central distinction is not whether parameters are shared across users, but \emph{what is shared, how reusable capacity is learned, and what must still be independently adapted for a target user}.

LINEUP differs from individualized adaptation methods by changing the basic personalization unit. Rather than constructing a complete adaptation state for each user or request, it directly learns fine-grained low-rank personalization factors across source users as reusable population-level capacity. Different users can reuse overlapping factors, while the same user can employ different compositions for different requests. LINEUP also differs from collaborative adapter-reuse methods in the origin of this reusable capacity, since its factor bank is learned jointly for cross-user composition from the outset, rather than being obtained by first learning user-, anchor-, or group-associated adapters and subsequently decomposing, selecting, or merging them.

Our composition mechanism further separates persistent user relevance from request-specific utility. A user's history first recalls a compact candidate set of factors, and the current query then calibrates their sparse selection and mixture. LINEUP further confines target-user optimization to an eight-dimensional code over shared correction maps. Therefore, most personalization capacity resides in reusable shared structures, while only a tiny coordinate state must be independently estimated for a new target user. This factor-centric design is motivated by our empirical findings.

Taken together, LINEUP \emph{decouples personalization capacity from per-user adaptation}: expressive personalization is learned predominantly through reusable, conditionally composed shared capacity, whereas independent target-user fitting is reserved for a compact residual correction. This organization simultaneously targets strong personalization, adaptation from limited user history, and low incremental parameter and storage cost as the user population grows.

\subsection{Mixture of Experts}
\label{app:related_work_moe}

Mixture-of-Experts (MoE) models combine specialized expert modules through learned routing mechanisms \citep{6797059, 10937907}. Sparse MoE activates only a subset of experts for each input, enabling specialized and compositional model capacity without proportional computation \citep{shazeer2017outrageously, pmlr-v162-du22c, dai-etal-2024-deepseekmoe}. Beyond general-purpose model scaling, MoE-style architectures have also been studied in decentralized collaborative training and personalized federated learning \citep{liu-etal-2026-modularmoe, zhang2026pc, 11359536, 10.1145/3696410.3714561}. In much of this literature, personalization is defined at the client level, with shared or private experts introduced to accommodate heterogeneity in local data distributions, model architectures, or computational resources \citep{9743558, 10.1016/j.eswa.2023.122874}. Personalized LLMs focus on a different problem where the model must infer latent individual preferences from behavioral histories and tailor its behavior jointly to the user and the current request.

LINEUP adopts the compositional principle of sparse expert models but uses it for a different granularity and purpose. Its experts are fine-grained low-rank personalization factors jointly learned as reusable cross-user capacity, rather than independently meaningful user or client models. No single factor is intended to represent a complete user. Instead, different users can reuse overlapping factors, and personalization emerges from their conditional composition. Routing is explicitly hierarchical, where user history first narrows the shared factor bank to a candidate set, after which the current query calibrates candidate factors. This separates persistent user relevance from request-specific utility rather than collapsing both into a single routing decision. LINEUP therefore uses expert-style composition as part of a broader personalization capacity allocation strategy, rather than as a conventional mechanism for model scaling or client specialization.

%% file: Appendix/B_Additional_Empirical_Analysis.tex
\setcounter{secnumdepth}{3}

\section{Additional Empirical Analysis}
\label{app:analysis}

This appendix provides definitions and supporting evidence for Section~\ref{sec:analysis}. The numbered subsections below provide direct access to each diagnostic definition, comparison, and validation.

\noindent\textbf{RQ1: Cross-user reuse.}
Shared subspaces and update coverage (Appendix~\ref{app:rq1_setup});
task-adapter control (Appendix~\ref{app:rq1_task_control}).

\noindent\textbf{RQ2: Direction utility.}
Setup and utility definition (Appendix~\ref{app:rq2_setup}); within-user
and cross-user agreement (Appendix~\ref{app:rq2_agreement});
scale-controlled query variation (Appendix~\ref{app:rq2_variation});
direction-removal validation (Appendix~\ref{app:rq2_removal}).

\noindent\textbf{RQ3: History-based correction.}
Correction signals and label controls (Appendix~\ref{app:rq3_setup});
support-to-query transfer (Appendix~\ref{app:rq3_transfer}); compact
correction spaces (Appendix~\ref{app:rq3_compact_metrics}); first-order
intervention validation (Appendix~\ref{app:rq3_validation}).

\paragraph{Diagnostic reference.}
RQ1 and RQ2 use independently fitted OPPU user LoRAs
\citep{tan-etal-2024-democratizing} on a common task-adapted backbone.
We analyze their effective updates $D_{u,i}=s_iB_{u,i}A_{u,i}$ rather than
the individual LoRA factors. RQ3 probes representation corrections in the
task-adapted backbone without user-specific adapters. Diagnostic user folds
and learned bases are separate from LINEUP's shared-training procedure and
are not used to train LINEUP.

\paragraph{Uncertainty and aggregation.}
We use fixed checkpoints. Unless stated otherwise, intervals are
pointwise 95\% percentile confidence intervals from 2,000 user-level
bootstrap draws, with checkpoints, diagnostic folds, and learned bases held
fixed. These intervals quantify user-sampling uncertainty rather than
variation across training seeds. Aggregation is specified below; RQ2 also
accounts for dependence between user pairs.

\subsection{RQ1: Cross-User Reuse}
\label{app:analysis_sharing}

\subsubsection{Shared Subspaces and Update Coverage}
\label{app:rq1_setup}

\paragraph{Diagnostic setup.}
We study LaMP-2M, LaMP-2N, LaMP-5, and LaMP-7, covering classification
and generation. Each user contributes 64 adaptation sites: the query and
value projections in all 32 layers. Every site has output dimension 4096.
Evaluated users do not participate in learning the shared directions.

\paragraph{Shared-subspace construction.}
For diagnostic fold $f$, let $\mathcal V_{-f}$ denote the users outside
the held-out fold. At site $i$, we construct the normalized output
covariance
\begin{equation}
C_i^{(-f)}
=
\frac{1}{|\mathcal V_{-f}|}
\sum_{u\in\mathcal V_{-f}}
\frac{D_{u,i}D_{u,i}^{\top}}{\|D_{u,i}\|_F^2}.
\label{eq:app_shared_covariance}
\end{equation}
The columns of $Q_{m,i}^{(-f)}$ are its leading $m$ eigenvectors.
Normalization gives users equal weight regardless of update magnitude.
Each held-out user is evaluated using a basis learned outside that user's
fold.

\paragraph{Coverage and random reference.}
For a held-out user, coverage measures the fraction of update energy
captured by the shared output subspace:
\begin{equation}
\operatorname{Coverage}_{u,i}(m)
=
\frac{\|(Q_{m,i}^{(-f)})^{\top}D_{u,i}\|_F^2}
     {\|D_{u,i}\|_F^2}.
\label{eq:app_update_coverage}
\end{equation}
This measures reuse of output directions while allowing their
coefficients to differ across users.
We average over held-out user--site pairs and evaluate
$m\in\{1,2,4,8,16,32,64,128\}$.
The random reference in Figure~\ref{fig:main_analysis}(a) averages five
Haar-random subspace draws, with coverage approximately 0.0078 at $m=32$
and 0.0312 at $m=128$. Coverage at $m=32$ is also reported in
Table~\ref{tab:random_partition_control} under genuine-user basis.

\subsubsection{Random-Partition Control}
\label{app:rq1_random_partition_control}

\paragraph{Protocol.}
We retain the diagnostic cohorts of 100, 49, 101, and 100 users on
LaMP-2M/2N/5/7, respectively.
Within each diagnostic fold, we pool the supervised examples used to
fit the genuine-user adapters and randomly repartition them into
pseudo-users, preserving every input--output pair and the original
group-size sequence. Each pseudo-user adapter is independently fitted
using the same task-adapted backbone, LoRA configuration, and
optimization schedule as its genuine-user counterpart.
We use one fixed random partition per task.
Partitioning within folds keeps held-out users' examples out of
basis construction.

\paragraph{Matched coverage comparison.}
We construct each basis from either genuine-user or random-partition
adapters outside the held-out fold, using
Equation~\ref{eq:app_shared_covariance}.
Both bases are evaluated on the same held-out genuine-user updates
using Equation~\ref{eq:app_update_coverage}.
We denote the genuine-minus-random coverage difference by
$\Delta_{\mathrm{group}}$.
Its 95\% confidence interval uses 2,000 paired user-level bootstrap
draws after averaging sitewise differences within each user.
Checkpoints, diagnostic folds, learned bases, and the random
partition remain fixed.

\paragraph{Results.}
At $m=32$, genuine-user bases provide higher coverage on all four
tasks, with absolute advantages of 0.143--0.210 and paired confidence
intervals excluding zero (Table~\ref{tab:random_partition_control});
reported differences are computed from unrounded coverage values.
Removing the task-adapter output span from both adapter populations
and repeating basis construction and evaluation
(Appendix~\ref{app:rq1_task_control}) leaves these gaps nearly unchanged.
Thus, preserving genuine user grouping yields shared directions that
better capture held-out genuine-user updates, strengthening the
cross-user reuse evidence in
\hyperref[obs:sharing]{Observation~1}.

\begin{table}[t]
\setlength{\tabcolsep}{4.0pt}
\renewcommand{\arraystretch}{1.12}

\caption{
Held-out update-energy coverage at $m=32$ for genuine-user and
size-matched random-partition bases. Genuine-user grouping improves coverage, including after task-span removal.
}
\label{tab:random_partition_control}
\begin{center}
\begin{tabular}{@{}lccccc@{}}
\toprule
\textbf{Task}
& \shortstack{\textbf{Genuine-user}\\\textbf{basis}}
& \shortstack{\textbf{Random-partition}\\\textbf{basis}}
& $\boldsymbol{\Delta}_{\mathrm{group}}$
& \textbf{95\% CI}
& \shortstack{\textbf{Post-removal}\\$\boldsymbol{\Delta}_{\mathrm{group}}$} \\
\midrule

LaMP-2M
& $.541$
& $.377$
& $\textcolor{tablegreen}{\mathbf{+.164}}$
& $[.159,\ .171]$
& $+.164$ \\

LaMP-2N
& $.621$
& $.411$
& $\textcolor{tablegreen}{\mathbf{+.210}}$
& $[.195,\ .226]$
& $+.212$ \\

LaMP-5
& $.679$
& $.536$
& $\textcolor{tablegreen}{\mathbf{+.143}}$
& $[.141,\ .147]$
& $+.144$ \\

LaMP-7
& $.620$
& $.462$
& $\textcolor{tablegreen}{\mathbf{+.158}}$
& $[.155,\ .163]$
& $+.158$ \\

\bottomrule
\end{tabular}
\end{center}
\end{table}

\subsubsection{Task-Adapter Control}
\label{app:rq1_task_control}

To test whether the observed coverage is concentrated in the trained
task adapter's output span, let $Q_{\mathrm{task},i}$ be an orthonormal
basis of the task-LoRA output span.
Its overlap with a user update is
$\|Q_{\mathrm{task},i}^{\top}D_{u,i}\|_F^2/\|D_{u,i}\|_F^2$.
Mean overlap is 2.85\%, 3.68\%, 3.65\%, and 2.00\% on
LaMP-2M/2N/5/7, respectively
(Figure~\ref{fig:appendix_analysis_1}(a)).

We remove this span,
\begin{equation}
D_{u,i}^{\perp}
=
\bigl(I-Q_{\mathrm{task},i}Q_{\mathrm{task},i}^{\top}\bigr)D_{u,i},
\label{eq:app_task_span_removal}
\end{equation}
and repeat shared-basis construction and coverage evaluation on the
residual updates. At $m=32$, residual coverage is 0.533, 0.613, 0.672,
and 0.615 on LaMP-2M/2N/5/7, respectively
(Figure~\ref{fig:appendix_analysis_1}(b)).
Coverage remains nearly unchanged, supporting the reusable structure in
\hyperref[obs:sharing]{Observation~1} beyond the task adapter's output span.

\begin{figure}[t]
    \centering
    \begin{minipage}[t]{0.48\linewidth}
        \centering
        \includegraphics[width=\linewidth]{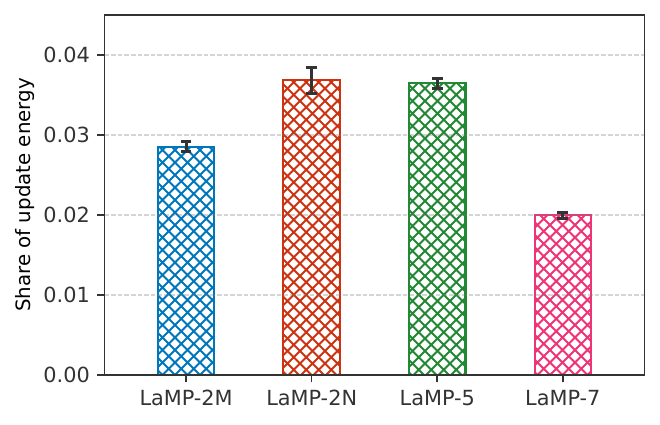}\\[-1mm]
        (a) Task-span overlap
    \end{minipage}\hfill%
    \begin{minipage}[t]{0.48\linewidth}
        \centering
        \includegraphics[width=\linewidth]{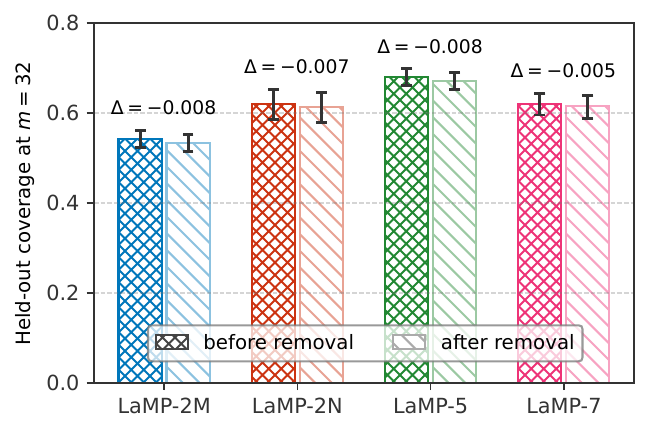}\\[-1mm]
        (b) Task-span removal
    \end{minipage}

    \caption{Task-adapter controls for
    \hyperref[sec:analysis_sharing]{RQ~1}.
    (a) Only a small fraction of user-update energy lies in the task-LoRA
    output span.
    (b) Removing this span leaves shared-subspace coverage nearly unchanged.}
    \label{fig:appendix_analysis_1}
\end{figure}

\subsection{RQ2: User- and Query-Dependent Utility}
\label{app:analysis_composing}

\subsubsection{Diagnostic Setup and Direction Utility}
\label{app:rq2_setup}

\paragraph{Diagnostic setup.}
We use LaMP-2M and LaMP-2N because they provide repeated held-out queries
per user, allowing utility to be compared while the fitted adapter remains
fixed. The diagnostic uses 100 users with 3,302 queries on LaMP-2M and
49 users with 6,033 queries on LaMP-2N. At each of the 64 sites, we use
the first eight shared directions learned outside the evaluated user's
fold.

\paragraph{Direction-utility definition.}
Let $q_{i,k}$ denote shared direction $k$ at site $i$. Its component in
user $u$'s fixed update is
$E_{u,i,k}=q_{i,k}q_{i,k}^{\top}D_{u,i}$.
We introduce diagnostic gates through
\begin{equation}
D_{u,i}(\boldsymbol\alpha_i)
=
D_{u,i}
+
\sum_{k=1}^{8}(\alpha_{i,k}-1)E_{u,i,k}.
\label{eq:app_gated_update}
\end{equation}
All gates equal to one recover the original adapter. Let
$\mathcal L_{u,j}(\boldsymbol\alpha)$ denote the query loss under the
gated model, where $\boldsymbol\alpha$ collects gates across sites. The
signed sensitivity is
\begin{equation}
G_{u,j,i,k}
=
\left.
\frac{\partial\mathcal L_{u,j}(\boldsymbol\alpha)}
     {\partial\alpha_{i,k}}
\right|_{\boldsymbol\alpha=\mathbf 1}.
\label{eq:app_gate_sensitivity}
\end{equation}
The sign indicates whether strengthening a component raises or
lowers loss to first order. The user adapter and diagnostic directions remain fixed
across queries; the gates are diagnostic variables, not a trained router.

\subsubsection{Within-User and Cross-User Agreement}
\label{app:rq2_agreement}

For each user, we use the first eight held-out queries. Within-user
agreement averages Spearman correlation over all $\binom{8}{2}=28$ query
pairs, then over users and sites. For cross-user agreement, let
$z_{u,j,i}$ denote the centered, unit-length rank vector of the eight
signed sensitivities, and define
$\bar z_{u,i}=\frac{1}{8}\sum_{j=1}^{8}z_{u,j,i}$.
The mean query-pair agreement between distinct users $u$ and $v$ is
\begin{equation}
\frac{1}{8^2}
\sum_{j=1}^{8}\sum_{j'=1}^{8}
z_{u,j,i}^{\top}z_{v,j',i}
=
\bar z_{u,i}^{\top}\bar z_{v,i}.
\label{eq:app_cross_user_agreement}
\end{equation}
We average these values over distinct user pairs and sites. Within-user
intervals resample users as clusters; cross-user intervals use a
user-vertex bootstrap to preserve dependence between pairs sharing a user.

Within- versus cross-user agreement is
$0.145\,[0.107,0.185]$ versus $0.008\,[0.004,0.014]$ on LaMP-2M and
$0.400\,[0.330,0.475]$ versus $0.059\,[0.021,0.112]$ on LaMP-2N
(Figure~\ref{fig:main_analysis}(b)). The within-user value is higher at all
64 sites on both tasks (Figure~\ref{fig:appendix_analysis_2}(b)).

\subsubsection{Scale-Controlled Query Variation}
\label{app:rq2_variation}

To distinguish changes in relative directional responses from overall
sensitivity scale, we normalize each eight-dimensional sensitivity vector
as $X_{u,j,i}=G_{u,j,i}/\|G_{u,j,i}\|_2$.
Let $SS_{\mathrm W,i}$ and $SS_{\mathrm B,i}$ denote the within-user and
between-user sums of squares of these normalized responses at site $i$.
We report
\begin{equation}
V_{\mathrm{query}}
=
\frac{\sum_{i\in\mathcal I}SS_{\mathrm W,i}}
     {\sum_{i\in\mathcal I}(SS_{\mathrm W,i}+SS_{\mathrm B,i})}.
\label{eq:app_query_variation}
\end{equation}
This fraction is $0.827\,[0.778,0.869]$ on LaMP-2M and
$0.613\,[0.527,0.699]$ on LaMP-2N
(Figure~\ref{fig:appendix_analysis_2}(a)). Substantial query-dependent
variation therefore remains after controlling for overall sensitivity
scale.

\subsubsection{Exact Direction-Removal Validation}
\label{app:rq2_removal}

We compare the first-order removal prediction $-G_{u,j,i,k}$ with the
exact query-loss change when one gate is set from one to zero while all
other gates remain unchanged. Each task contributes 512 interventions:
two users, eight queries per user, eight directions, and four sites.
Pooled Pearson correlations are 0.998955 on LaMP-2M and 0.999295 on
LaMP-2N (Figure~\ref{fig:appendix_analysis_2}(c)). This agreement
supports the functional interpretation of the signed sensitivities in
\hyperref[obs:composing]{Observation~2}.

\begin{figure}[t]
    \centering
    \begin{minipage}[t]{0.26\linewidth}
        \centering
        \includegraphics[height=3.6cm]{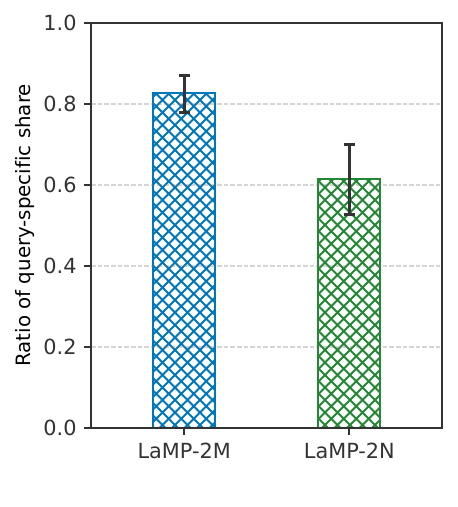}\\[-1mm]
        (a) Query-specific variation
    \end{minipage}\hfill%
    \begin{minipage}[t]{0.43\linewidth}
        \centering
        \includegraphics[height=3.6cm]{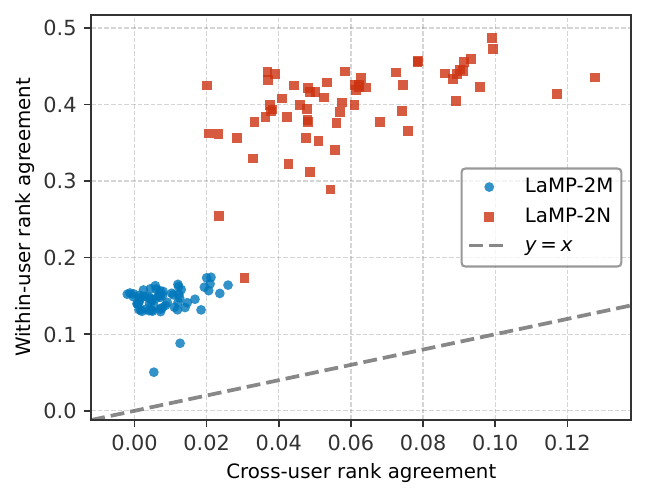}\\[-1mm]
        (b) Sitewise rank agreement
    \end{minipage}\hfill%
    \begin{minipage}[t]{0.28\linewidth}
        \centering
        \includegraphics[height=3.6cm]{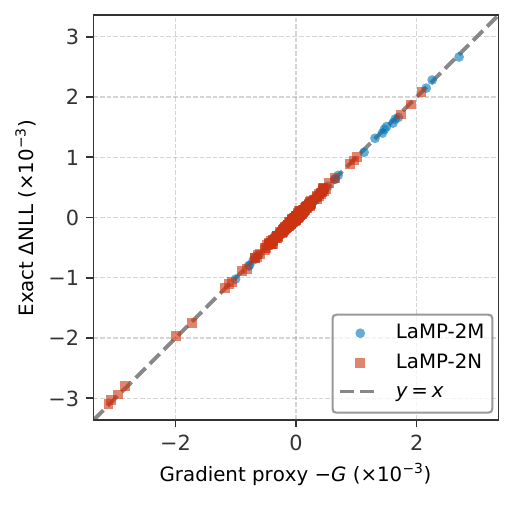}\\[-1mm]
        (c) Direction-removal validation
    \end{minipage}

    \caption{Additional evidence for user- and query-dependent utility in
    \hyperref[sec:analysis_composing]{RQ~2}.
    (a) Substantial query variation remains after sensitivity-scale control.
    (b) Within-user rank agreement exceeds cross-user agreement at every
    analyzed site.
    (c) First-order predictions closely match exact direction-removal loss
    changes.}
    \label{fig:appendix_analysis_2}
\end{figure}

\subsection{RQ3: Compact History-Based Correction}
\label{app:analysis_correcting}

\subsubsection{Correction Signals and Label Controls}
\label{app:rq3_setup}

\paragraph{Diagnostic setup and correction signals.}
We use LaMP-2M and LaMP-2N, whose finite categorical label spaces allow
label-based controls. The task-adapted backbone is frozen and contains no
user-specific adapters. At zero-indexed layers
$\mathcal L_{\mathrm{diag}}=\{0,4,9,13,18,22,27,31\}$, we add a vector
$v_\ell\in\mathbb R^{4096}$ to all $T_z$ non-padding token positions of
example $z$. Let $\bar{\mathcal L}_{\mathcal T}(z;v_\ell)$ denote the
mean target-token NLL under this intervention. Following the
representation-space view in Section~\ref{sec:lora_preliminaries}, the
length-rescaled correction signal is
\begin{equation}
\gamma_z^\ell
=
-\frac{1}{T_z}
\left.
\nabla_{v_\ell}\bar{\mathcal L}_{\mathcal T}(z;v_\ell)
\right|_{v_\ell=0}.
\label{eq:app_correction_signal}
\end{equation}
This rescales the loss-reducing gradient by the number of positions
receiving the intervention; it is not unit-normalized.

Diagnostic support $\mathcal S_u^{\mathrm{diag}}\subseteq\mathcal H_u$
contains the first $\min(32,|\mathcal H_u|)$ available history records.
Their mean correction direction is
\begin{equation}
\bar s_u^\ell
=
\frac{1}{|\mathcal S_u^{\mathrm{diag}}|}
\sum_{z\in\mathcal S_u^{\mathrm{diag}}}\gamma_z^\ell.
\label{eq:app_support_direction_raw}
\end{equation}

\paragraph{Label controls and donor histories.}
For each diagnostic fold and layer, we compute the record-weighted mean
correction signal $m_c^\ell$ for each label $c$ using only diagnostic
support from analysis-training users. Let $Q_{\mathrm{label}}^\ell$ be
an orthonormal basis of
$\operatorname{span}\{m_1^\ell,\ldots,m_C^\ell\}$.
Both tasks have $C=15$ labels, and the span has dimension 15 at every
analyzed fold and layer. We remove this span from query and support signals:
\begin{equation}
\begin{aligned}
\gamma_z^{\ell,\perp}
&=
\bigl(I-Q_{\mathrm{label}}^\ell(Q_{\mathrm{label}}^\ell)^\top\bigr)
\gamma_z^\ell,\\
s_u^{\ell,\perp}
&=
\bigl(I-Q_{\mathrm{label}}^\ell(Q_{\mathrm{label}}^\ell)^\top\bigr)
\bar s_u^\ell.
\end{aligned}
\label{eq:app_label_projection}
\end{equation}
Held-out query signals are used only for evaluation, not to construct the
label span.

We compare three history sources. \emph{Own} uses the target user's
history. \emph{Random} uses a randomly selected donor from another
diagnostic fold. \emph{Matched} uses an out-of-fold donor with the nearest
historical label distribution in $L_1$ distance. All support directions
follow the same construction and label-span removal.

\subsubsection{Support-to-Query Transfer}
\label{app:rq3_transfer}

For history source $a$, let $s_{a(u)}^{\ell,\perp}$ denote its support
direction. Signed transfer to held-out query $j$ of user $u$ is
\begin{equation}
T_{u,j,\ell}^{(a)}
=
\bigl(\gamma_{u,j}^{\ell,\perp}\bigr)^\top s_{a(u)}^{\ell,\perp}.
\label{eq:app_full_transfer}
\end{equation}
A positive value indicates alignment between the history-derived
correction and a first-order loss-reducing query direction. The statistic
retains signal magnitudes; token count and intervention step enter the
corresponding first-order NLL prediction
(Equation~\ref{eq:app_first_order_injection}). We average over held-out
query records within each layer, then average the eight layerwise means
equally.

Own-history transfer is $2.190\times10^{-6}$ with interval
$[0.798,3.906]\times10^{-6}$ on LaMP-2M and $1.930\times10^{-5}$ with
interval $[0.808,3.058]\times10^{-5}$ on LaMP-2N
(Figure~\ref{fig:main_analysis}(c)). Own-history point estimates are
positive at all eight layers; intervals exclude zero at seven layers on
LaMP-2M and all eight on LaMP-2N. Random and matched donor intervals
cross zero on LaMP-2M, while both donor means are negative on LaMP-2N.
This consistent own-history alignment supports individual correction
beyond the removed label-mean span
(\hyperref[obs:correcting]{Observation~3}).

\subsubsection{Compact Correction Spaces}
\label{app:rq3_compact_metrics}

For each layer and diagnostic fold, we learn an orthonormal correction
basis $R_{r,\ell}$ using only support-side signals from analysis-training
users outside the evaluated fold. Held-out query signals are used only
for evaluation. Let $P_{r,\ell}=R_{r,\ell}R_{r,\ell}^{\top}$.
We measure both retained correction energy and retained support-to-query
transfer. Residual query-gradient coverage is
\begin{equation}
\operatorname{Coverage}^{\mathrm{corr}}_{u,j,\ell}(r)
=
\frac{\|R_{r,\ell}^{\top}\gamma_{u,j}^{\ell,\perp}\|_2^2}
     {\|\gamma_{u,j}^{\ell,\perp}\|_2^2}.
\label{eq:app_correction_coverage}
\end{equation}
Writing $T_{u,j,\ell}^{\mathrm{own}}$ for the full-space transfer under
\emph{Own}, its projected counterpart is
\begin{equation}
T_{u,j,\ell}^{\mathrm{own}}(r)
=
\bigl(\gamma_{u,j}^{\ell,\perp}\bigr)^\top
P_{r,\ell}s_u^{\ell,\perp}.
\label{eq:app_projected_transfer}
\end{equation}
For the set $\mathcal J$ of held-out user--query records, transfer
retention is
\begin{equation}
\operatorname{Retention}(r)
=
\frac{1}{|\mathcal L_{\mathrm{diag}}|}
\sum_{\ell\in\mathcal L_{\mathrm{diag}}}
\frac{\sum_{(u,j)\in\mathcal J}T_{u,j,\ell}^{\mathrm{own}}(r)}
     {\sum_{(u,j)\in\mathcal J}T_{u,j,\ell}^{\mathrm{own}}}.
\label{eq:app_transfer_retention}
\end{equation}
The transfer ratio is computed within each layer before averaging equally
across layers. Coverage and retention use different references: residual
query-gradient energy and full-space, label-controlled support-to-query
transfer, respectively.

At $r=8$, energy coverage is $0.129\,[0.123,0.136]$ on LaMP-2M and
$0.170\,[0.160,0.181]$ on LaMP-2N, whereas transfer retention is
$0.347\,[0.222,0.479]$ and $0.294\,[0.198,0.413]$, respectively
(Figure~\ref{fig:appendix_analysis_3}(a)).
Figures~\ref{fig:appendix_analysis_3}(b,c) show the corresponding trends
over $r\in\{1,2,4,8,16\}$. At $r=16$, coverage reaches 0.168 and 0.244,
while retention reaches 0.425 and 0.416. These comparisons support the
preferential retention of transferable correction in
\hyperref[obs:correcting]{Observation~3}. The diagnostic sweep is not
used to select LINEUP's user-code dimension.

\subsubsection{First-Order Intervention Validation}
\label{app:rq3_validation}

We compare first-order predictions with exact loss changes after injecting
the own-history direction projected into the $r=8$ diagnostic space.
Because Equation~\ref{eq:app_correction_signal} includes $1/T_z$, the
predicted change in mean target-token NLL is
\begin{equation}
\Delta\mathrm{NLL}_z
\approx
-T_z\,(\gamma_z^\ell)^\top v,
\label{eq:app_first_order_injection}
\end{equation}
where $v$ is the projected support direction scaled by a small step size
calibrated on analysis-training users.

Each task contributes 64 interventions: two users, four queries per user,
and eight layers. Median layerwise Pearson correlations are 0.9998 on
LaMP-2M and 0.9997 on LaMP-2N
(Figure~\ref{fig:appendix_analysis_3}(d)).

\begin{figure}[t]
    \centering
    \begin{minipage}[t]{0.245\linewidth}
        \centering
        \includegraphics[width=\linewidth]{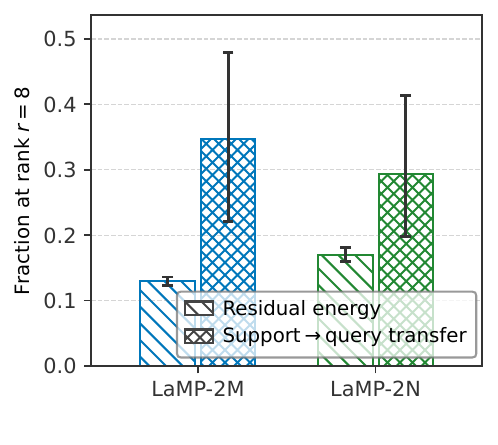}\\[-1mm]
        (a) Transfer vs.\ energy
    \end{minipage}\hfill%
    \begin{minipage}[t]{0.245\linewidth}
        \centering
        \includegraphics[width=\linewidth]{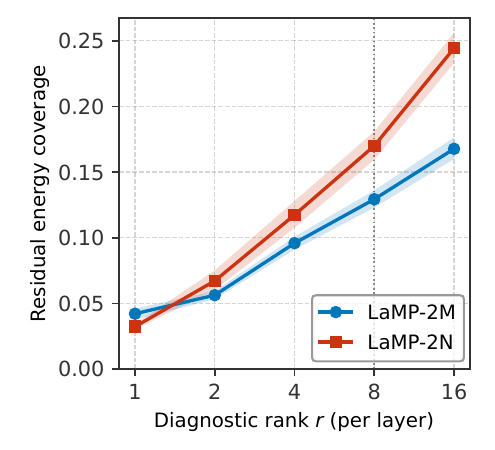}\\[-1mm]
        (b) Residual-energy coverage
    \end{minipage}\hfill%
    \begin{minipage}[t]{0.245\linewidth}
        \centering
        \includegraphics[width=\linewidth]{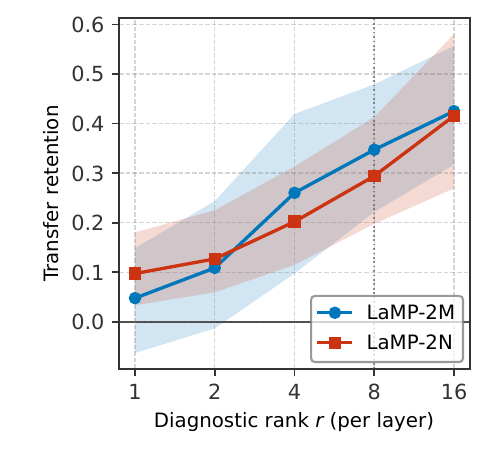}\\[-1mm]
        (c) Transfer retention
    \end{minipage}\hfill%
    \begin{minipage}[t]{0.245\linewidth}
        \centering
        \includegraphics[width=\linewidth]{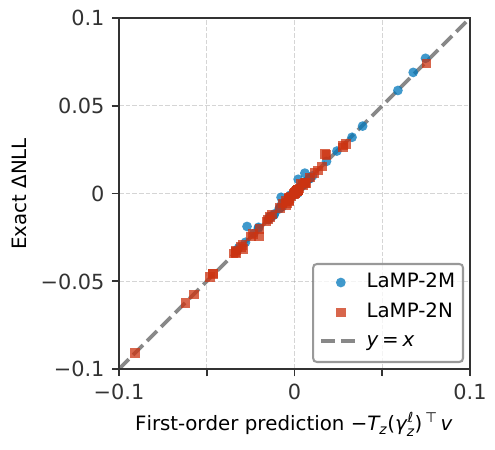}\\[-1mm]
        (d) First-order intervention validation
    \end{minipage}

    \caption{Additional evidence for compact history-based correction in
    \hyperref[sec:analysis_correcting]{RQ~3}.
    (a) At rank eight, transfer retention exceeds residual-energy coverage.
    (b) Residual-energy coverage increases with diagnostic rank.
    (c) Higher ranks retain more support-to-query transfer.
    (d) First-order predictions track exact intervention loss changes in most
    tested cases.}
    \label{fig:appendix_analysis_3}
\end{figure}

%% file: Appendix/C_Methodology_Details.tex
\section{Methodology Details}
\label{app:method_details}

This appendix defines the shared factors, conditional composition, and
user-code adaptation in Section~\ref{sec:method}.
Algorithm~\ref{alg:lineup} summarizes shared training and target-user
personalization. Task-specific hyperparameters and model-selection
settings are reported with the experimental configuration.

\subsection{Algorithm of LINEUP}
\label{app:lineup_algorithm}

\paragraph{Shared parameters and user state.}
The S1 and S2 shared parameters are
\[
\phi_1=\{A_{k,i},B_{k,i}\}_{k,i}\cup\{W_H,W_Q\},
\qquad
\phi_2=\{V_\ell\}_{\ell\in\mathcal L_{\mathrm{corr}}}\cup\{G\}.
\]
Together, $\phi_{\mathcal T}=\phi_1\cup\phi_2$ instantiates the shared
personalization parameters in
Section~\ref{sec:personalized_problem_setup}. The full method's only
independently adapted target-user state is
$\omega_u=\hat c_u\in\mathbb R^{r_{\mathrm c}}$.
The task-adapted backbone $\theta_{\mathcal T}$ and feature extraction
remain fixed throughout.

\paragraph{Training episodes and prediction loss.}
Each source-user episode contains a history $\mathcal H_u^{\mathrm{ep}}$
and supervised examples $\mathcal B_u$. Depending on the task, supervision
comes from history records held out from the episode history or from the
source user's task queries. For a predictor $f$ conditioned on the episode
history, define
\begin{equation}
\mathcal L_{\mathrm{tok}}(\mathcal B;f)
=
-\frac{1}{N(\mathcal B)}
\sum_{(x,y)\in\mathcal B}\sum_{t=1}^{|y|}
\log f(y_t\mid x,y_{<t}),
\label{eq:lineup_token_loss}
\end{equation}
where $N(\mathcal B)=\sum_{(x,y)\in\mathcal B}|y|$ counts supervised
output tokens, excluding prompt and padding tokens. Write $f_1$ for the
S1-only predictor and $f_2(c)$ for the predictor with S1 and code-controlled
correction. Both use the history encoding and each example's input for
routing.

\paragraph{Shared training.}
Stage 1 optimizes $\phi_1$ using $f_1$. Stage 2 freezes S1 and optimizes
$\phi_2$ using $f_2(G\tilde e_u)$ directly. The correction maps and
initializer are therefore learned from predictions at initialized codes,
before target-user support fitting. All shared parameters are then fixed
for target-user personalization.

\begin{algorithm}[t]
\caption{LINEUP}
\label{alg:lineup}
\begin{algorithmic}[1]
\Require Frozen task model $\theta_{\mathcal T}$, source users,
         target-user histories and query inputs, code-update budget $T$
\State Construct fixed feature maps and source-training centering statistics
\State Initialize $\phi_1$ and $\phi_2$

\Statex \textbf{Stage 1: Learn shared factor composition}
\For{each S1 training episode}
    \State Sample $(\mathcal H_u^{\mathrm{ep}},\mathcal B_u)$
           from a source user
    \State Encode the episode history; route each input as in
           Appendix~\ref{app:lineup_composition}
    \State Update $\phi_1$ on
           $\mathcal L_{\mathrm{tok}}(\mathcal B_u;f_1)$
\EndFor
\State Freeze $\phi_1$

\Statex \textbf{Stage 2: Learn shared corrections and initialization}
\For{each S2 training episode}
    \State Sample $(\mathcal H_u^{\mathrm{ep}},\mathcal B_u)$;
           compute $\tilde e_u$
    \State Set $c_u^{(0)}\gets G\tilde e_u$ and keep S1 active
    \State Update $\phi_2$ on
           $\mathcal L_{\mathrm{tok}}(\mathcal B_u;f_2(c_u^{(0)}))$
\EndFor
\State Freeze all shared parameters

\Statex \textbf{Target-user personalization and prediction}
\For{each target user $u$}
    \State Encode available $\mathcal H_u$; recall $\mathcal C_u$;
           construct $\mathcal S_u$
    \State Set $c_u^{(0)}\gets G\tilde e_u$,
           $c\gets c_u^{(0)}$; initialize fresh Adam state
    \For{$t=1,\ldots,T$}
        \State Route support inputs using the fixed history encoding;
               evaluate $\mathcal J_u(c)$
        \State Take one Adam update of $c$ only
    \EndFor
    \State Store $\hat c_u\gets c$
    \For{each query input $x$ of user $u$}
        \State Compute $\mathcal A_{u,x}$ and $w_{u,x}$ from
               the history encoding and current query
        \State Predict using the shared-factor mixture and correction
               from $\hat c_u$
    \EndFor
\EndFor
\end{algorithmic}
\end{algorithm}

\subsection{Shared Factors and Hierarchical Composition}
\label{app:lineup_composition}

\paragraph{Factor parameterization.}
At site $i$, factor $k$ contributes $F_{k,i}=s_iB_{k,i}A_{k,i}$, where
$A_{k,i}\in\mathbb R^{r_{\mathrm L}\times d_{\mathrm{in},i}}$,
$B_{k,i}\in\mathbb R^{d_{\mathrm{out},i}\times r_{\mathrm L}}$, and
$s_i$ is the LoRA scaling coefficient. Factor index $k$ identifies
coordinated low-rank updates across the query and value projections
throughout the transformer. These factors are learned directly through
source-user prediction under their conditional composition, rather than
obtained from the diagnostic adapters in Section~\ref{sec:analysis}.

\paragraph{History and query features.}
Let $E$ be the frozen input-embedding table and
$\bar e(s)=|s|^{-1}\sum_t E[s_t]$ the mean of the non-padding token
embeddings of text $s$. Historical records are rendered using their
available inputs and outputs. With separate fixed random projections
$P_H,P_Q\in\mathbb R^{d_f\times d}$, we compute
\begin{equation}
\begin{aligned}
e_u
&=P_H\frac{1}{|\mathcal H|}
  \sum_{h\in\mathcal H}\bar e(\operatorname{render}(h)),
&\tilde e_u&=\operatorname{norm}(e_u-\mu_H),\\
e_x
&=P_Q\bar e(\operatorname{prompt}(x)),
&\tilde e_x&=\operatorname{norm}(e_x-\mu_Q),
\end{aligned}
\label{eq:lineup_features}
\end{equation}
where $\operatorname{norm}(v)=v/\|v\|_2$. Here $\mathcal H$ is the
episode history during shared training and the available history during
target-user personalization. The centering vectors $\mu_H,\mu_Q$ are
estimated from source-training histories and query prompts and then fixed.
Each history record is pooled separately before averaging across records;
query encoding uses only the current input prompt. Historical outputs
provide personalization evidence, but evaluation targets never enter the
features. S1 recall and the S2 initializer use the same history encoding.

\paragraph{Candidate recall and query calibration.}
The candidate set $\mathcal C_u$, calibrated scores $z_{u,x}$, and
active set $\mathcal A_{u,x}$ are defined in
Equation~\ref{eq:lineup_routing}. Softmax over active scores gives
\begin{equation}
w_{u,x,k}
=
\begin{cases}
\displaystyle
\frac{\exp(z_{u,x,k})}
     {\sum_{j\in\mathcal A_{u,x}}\exp(z_{u,x,j})},
& k\in\mathcal A_{u,x},\\[5pt]
0,& k\notin\mathcal A_{u,x}.
\end{cases}
\label{eq:lineup_weights}
\end{equation}
These weights yield the update in Equation~\ref{eq:lineup_composition},
replacing $W_i h$ by $(W_i+D_{u,x,i})h$. One factor mixture applies across
all adapted sites and every prompt or generated token of a request.
When $M_a=M_c$, query calibration reweights the recalled factors; when
$M_a<M_c$, it can also change the active subset. Prediction gradients
update active factors and routing scores through the selected-set softmax;
TopK indices are treated as discrete choices.

\subsection{Tiny User Codes and Support Adaptation}
\label{app:lineup_code_adaptation}

\paragraph{Shared correction maps.}
Equation~\ref{eq:lineup_correction} uses a shared map
$V_\ell\in\mathbb R^{d\times r_{\mathrm c}}$ at each selected block
output $\ell\in\mathcal L_{\mathrm{corr}}$ and one user code shared
across these layers. Any fixed multiplicative scaling is absorbed into $V_\ell$, so $V_\ell$ denotes the effective shared correction map. For a general forward pass, replace $\hat c_u$ by
$c_u^{(0)}$ during shared S2 training, $c$ during support fitting, and
$\hat c_u$ during target-query prediction. The binary mask $a_{\ell,t}$
selects correction positions. In multi-token passes, it selects configured
boundary windows of the non-padding sequence: prompt plus output for
teacher forcing, and prompt only for generation prefill. Cached decoding
applies the correction to each newly processed token. Layer choices and
window sizes are specified with task configurations.

\paragraph{History-informed initialization.}
The shared linear map $G\in\mathbb R^{r_{\mathrm c}\times d_f}$ produces
$c_u^{(0)}=G\tilde e_u$ from the same history encoding used by S1.
Source-user training learns $G$ and
$\{V_\ell\}_{\ell\in\mathcal L_{\mathrm{corr}}}$ through prediction
loss at these initialized codes. Target-user support fitting refines this
starting point while all shared maps remain fixed.

\paragraph{Support construction.}
A task-specific construction maps selected records from the available
history to supervised input--output pairs $\mathcal S_u$. History encoding
uses all available records, independently of how many are selected for
fitting. Under limited-history evaluation, both conditioning and support
construction use only the available history.

\paragraph{Code refinement.}
Write $\widehat{\mathcal L}_{\mathcal S_u}(c)
=\mathcal L_{\mathrm{tok}}(\mathcal S_u;f_2(c))$.
The fitting objective is
\begin{equation}
\mathcal J_u(c)
=
\widehat{\mathcal L}_{\mathcal S_u}(c)
+\lambda_{\mathrm{reg}}\|c-c_u^{(0)}\|_2^2.
\label{eq:lineup_support_fitting}
\end{equation}
Cross-entropy is averaged over supervised output tokens pooled across
support examples, rather than averaging example losses. The regularizer
penalizes squared deviation from the fixed history-based initialization.

S1 remains active: each support input receives its own query-calibrated
factor mixture using the unchanged history encoding. Fitting uses the
same routing and correction configuration as target-query prediction.
A fresh Adam state is initialized for each user, and $T$ full-support
updates are taken from $c_u^{(0)}$. Only the code is updated; the backbone
and all shared parameters remain fixed. The final iterate $\hat c_u$ is
reused across subsequent queries, while S1 recomputes their mixtures from
the same history and their respective inputs. We use $r_{\mathrm c}=8$.
S1-only evaluation omits correction and code fitting; full LINEUP includes
history-based initialization followed by support refinement.

%% file: Appendix/D_Theoretical_Analysis_and_Proofs.tex
\section{Theoretical Analysis and Proofs}
\label{app:s2_theory}

We analyze the finite-step S2 estimator conditional on all source-learned
parameters; a quadratic specialization explains the learned geometry.

\subsection{History-Initialized Fitting and Query Risk}
\label{app:s2_risk}

\paragraph{Setup.}
Fix a user with $N$ history records $\mathcal H$ and $n$ support examples.
Stack the shared correction maps into $V\in\mathbb R^{D\times r}$,
where $D=|\mathcal L_{\mathrm{corr}}|d$ and $r=r_{\mathrm c}$.
One code controls all selected layers, fixed scaling is included in $V$,
and S1 remains history--query conditioned.
For the token-pooled support NLL $\widehat L_{\mathcal H}$ in
Equation~\ref{eq:lineup_support_fitting}, write
\begin{equation}
c_0=G\tilde e(\mathcal H),\qquad
F_{\mathcal H}(c)=\widehat L_{\mathcal H}(c)
+\frac{\rho}{2}\|c-c_0\|_2^2,\qquad
\rho=2\lambda_{\mathrm{reg}}\geq0.
\label{eq:s2_theory_objective}
\end{equation}
Let $c_T$ be the final code and $d_T=c_T-c_0$.
For expected query NLL write
$\mathcal R_{\mathcal H}(c)=\mathcal R^{\mathrm{rep}}_{\mathcal H}(Vc)$
and $\mathcal R_{\mathcal H}^{\star}=\inf_v\mathcal R^{\mathrm{rep}}_{\mathcal H}(v)$:
$v$ has the same correction scope, not arbitrary backbone updates.
The query distribution and loss weighting are fixed and may differ from
support. With finite risks, define
\begin{equation}
\mathcal A_{\mathcal H}
=\inf_c\mathcal R_{\mathcal H}(c)-\mathcal R_{\mathcal H}^{\star},
\qquad
\mathcal I_{\mathcal H}
=\mathcal R_{\mathcal H}(c_0)-\inf_c\mathcal R_{\mathcal H}(c).
\label{eq:s2_theory_errors}
\end{equation}
These are shared-space approximation and initialization errors.
Let $\Delta_T=F_{\mathcal H}(c_0)-F_{\mathcal H}(c_T)$ and
\begin{equation}
\epsilon_{\mathcal H}(R)
=\sup_{\|d\|_2\leq R}
\|\nabla\mathcal R_{\mathcal H}(c_0+d)
-\nabla\widehat L_{\mathcal H}(c_0+d)\|_2.
\label{eq:s2_gradient_discrepancy}
\end{equation}

\begin{proposition}[Finite-step comparison]
\label{prop:s2_pathwise}
Suppose the losses are continuously differentiable on the segment from
$c_0$ to $c_T$, $\|d_T\|_2\leq R$, and $\rho>0$. Then
\begin{equation}
\mathcal R_{\mathcal H}(c_T)-\mathcal R_{\mathcal H}^{\star}
\leq\mathcal A_{\mathcal H}+\mathcal I_{\mathcal H}-\Delta_T
+\frac{\epsilon_{\mathcal H}(R)^2}{2\rho}.
\label{eq:s2_pathwise_risk}
\end{equation}
\end{proposition}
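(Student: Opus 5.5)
The plan is to reduce the claim to a one-dimensional comparison along the segment from $c_0$ to $c_T$. By the definitions in Equation~\ref{eq:s2_theory_errors}, the infimum $\inf_c\mathcal R_{\mathcal H}(c)$ cancels in the sum, so
\[
\mathcal A_{\mathcal H}+\mathcal I_{\mathcal H}=\mathcal R_{\mathcal H}(c_0)-\mathcal R_{\mathcal H}^{\star}.
\]
This uses the finiteness of the risks, so no $\infty-\infty$ arises. Subtracting this identity from Equation~\ref{eq:s2_pathwise_risk} shows that the statement is equivalent to
\[
\mathcal R_{\mathcal H}(c_T)-\mathcal R_{\mathcal H}(c_0)
\leq -\Delta_T+\frac{\epsilon_{\mathcal H}(R)^2}{2\rho}.
\]
Expanding $\Delta_T$ via Equation~\ref{eq:s2_theory_objective} gives
\[
-\Delta_T=\widehat L_{\mathcal H}(c_T)-\widehat L_{\mathcal H}(c_0)+\frac{\rho}{2}\|d_T\|_2^2 .
\]
No property of Adam is used at this point. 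The argument is pathwise and depends only on the endpoint $c_T$.

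Next I introduce the discrepancy function $g=\mathcal R_{\mathcal H}-\widehat L_{\mathcal H}$, which is continuously differentiable on the segment by assumption. The target inequality becomes
\[
g(c_T)-g(c_0)\leq \frac{\rho}{2}\|d_T\|_2^2+\frac{\epsilon_{\mathcal H}(R)^2}{2\rho}.
\]
By the fundamental theorem of calculus along $s\mapsto c_0+s\,d_T$ for $s\in[0,1]$,
\[
g(c_T)-g(c_0)=\int_0^1 \nabla g(c_0+s\,d_T)^{\top}d_T\,ds .
\]
Every point on the path satisfies $\|s\,d_T\|_2\leq\|d_T\|_2\leq R$, so it lies in the ball over which the supremum in Equation~\ref{eq:s2_gradient_discrepancy} is taken. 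Cauchy--Schwarz then bounds the integrand by $\epsilon_{\mathcal H}(R)\|d_T\|_2$.

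Finally, Young's inequality with $\rho>0$ gives
\[
\epsilon\|d_T\|_2\leq \frac{\rho}{2}\|d_T\|_2^2+\frac{\epsilon^2}{2\rho}.
\]
Combining this with the previous steps yields the required bound. Here the regularizer's quadratic term exactly absorbs the linear discrepancy term, which is why $\rho>0$ is needed.

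I do not expect a genuine obstacle. The only points needing care are bookkeeping ones. First, the identity $\mathcal A_{\mathcal H}+\mathcal I_{\mathcal H}=\mathcal R_{\mathcal H}(c_0)-\mathcal R_{\mathcal H}^{\star}$ relies on the finiteness assumption. Second, the gradient-discrepancy bound must only be invoked at points within radius $R$, which holds because the segment stays inside the ball of radius $\|d_T\|_2\leq R$.
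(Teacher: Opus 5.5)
Your proposal is correct and follows essentially the same route as the paper: integrate the discrepancy $\mathcal R_{\mathcal H}-\widehat L_{\mathcal H}$ along the segment to get the bound $\epsilon_{\mathcal H}(R)\|d_T\|_2$, rewrite $\widehat L_{\mathcal H}(c_T)-\widehat L_{\mathcal H}(c_0)=-\Delta_T-\rho\|d_T\|_2^2/2$, use $\mathcal R_{\mathcal H}(c_0)-\mathcal R_{\mathcal H}^{\star}=\mathcal A_{\mathcal H}+\mathcal I_{\mathcal H}$, and absorb the linear term via $-\rho x^2/2+\epsilon x\leq\epsilon^2/(2\rho)$. Your bookkeeping remarks are the points the paper's terse proof leaves implicit: finiteness is needed for the cancellation, and the segment must stay inside the radius-$R$ ball.
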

\begin{proof}
Integrating $E_{\mathcal H}=\mathcal R_{\mathcal H}-\widehat L_{\mathcal H}$
along the segment yields
$E_{\mathcal H}(c_T)-E_{\mathcal H}(c_0)
\leq\epsilon_{\mathcal H}(R)\|d_T\|_2$.
Also,
$\widehat L_{\mathcal H}(c_T)-\widehat L_{\mathcal H}(c_0)
=-\Delta_T-\rho\|d_T\|_2^2/2$.
Combine these relations with
$\mathcal R_{\mathcal H}(c_0)-\mathcal R_{\mathcal H}^{\star}
=\mathcal A_{\mathcal H}+\mathcal I_{\mathcal H}$, then use
$-\rho x^2/2+\epsilon x\leq\epsilon^2/(2\rho)$.
\end{proof}

\subsection{Finite History and Shared Conditioning}
\label{app:s2_history}

Define
$Z_{\mathcal H}(d)=\nabla\mathcal R_{\mathcal H}(c_0(\mathcal H)+d)
-\nabla\widehat L_{\mathcal H}(c_0(\mathcal H)+d)$.
Record replacement acts on conditioning, initialization, and support
jointly; initialization is not assumed independent of support.

\begin{proposition}[Uniform gradient discrepancy]
\label{prop:s2_history_bound}
Condition on the user and shared parameters. Assume independent records
and square-integrable, $L_Z$-Lipschitz $Z_{\mathcal H}$ on the radius-$R$
ball. Uniformly over this ball and permitted record replacements, assume
\begin{equation}
\|\mathbb E Z_{\mathcal H}(d)\|_2\leq b,\qquad
\|Z_{\mathcal H}(d)-Z_{\mathcal H^{(i)}}(d)\|_2\leq a_i.
\label{eq:s2_history_conditions}
\end{equation}
Here $\mathcal H^{(i)}$ replaces record $i$, and all bounds are
deterministic. Set $S_a=\sum_i a_i^2$. For fixed $s>0$ and
$\delta\in(0,1)$, with probability at least $1-\delta$,
\begin{equation}
\epsilon_{\mathcal H}(R)\leq
\overline\epsilon(R,s,\delta)
:=b+\sqrt{S_a/2}+L_Zs
+\sqrt{\frac{S_a}{2}
\left[r\log\left(1+\frac{2R}{s}\right)+\log\frac1\delta\right]}.
\label{eq:s2_history_bound}
\end{equation}
\end{proposition}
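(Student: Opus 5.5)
We prove the bound with a standard discretize-then-concentrate argument on the radius-$R$ ball $\mathcal B_R=\{d\in\mathbb R^r:\|d\|_2\le R\}$, since $\epsilon_{\mathcal H}(R)=\sup_{d\in\mathcal B_R}\|Z_{\mathcal H}(d)\|_2$. First fix an $s$-net $\mathcal N_s$ of $\mathcal B_R$. The usual volumetric argument gives $|\mathcal N_s|\le(1+2R/s)^r$. For any $d\in\mathcal B_R$, choose $d'\in\mathcal N_s$ with $\|d-d'\|_2\le s$. The $L_Z$-Lipschitz assumption then yields
\[
\|Z_{\mathcal H}(d)\|_2\le\|Z_{\mathcal H}(d')\|_2+L_Zs,
\]
so it suffices to control $\max_{d'\in\mathcal N_s}\|Z_{\mathcal H}(d')\|_2$. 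This accounts for the $L_Zs$ term.

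At a fixed net point $d'$, split the norm as $\|Z_{\mathcal H}(d')\|_2\le\|\mathbb E Z_{\mathcal H}(d')\|_2+\|Z_{\mathcal H}(d')-\mathbb E Z_{\mathcal H}(d')\|_2$. The first piece is at most $b$. For the second, consider the scalar function $\Phi(\mathcal H)=\|Z_{\mathcal H}(d')-\mathbb E Z_{\mathcal H}(d')\|_2$ of the independent records.

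\begin{itemize}
\item \textbf{Bounded differences.} By the triangle inequality, replacing record $i$ changes $\Phi$ by at most $a_i$. This is where we use that the replacement bound holds jointly over conditioning, initialization $c_0(\mathcal H)$ and support. $Z_{\mathcal H}$ is defined with the moving center $c_0(\mathcal H)+d$, so no independence between initialization and support is needed.
\item \textbf{Mean.} Bound $\mathbb E\Phi\le\sqrt{\mathbb E\Phi^2}$. Expand $Z-\mathbb EZ$ as a Doob martingale-difference sum over records. Orthogonality of the vector-valued increments, each bounded by $a_i$, gives $\mathbb E\Phi^2\le\sum_i a_i^2=S_a$. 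To get the constant $\sqrt{S_a/2}$, use the Efron--Stein inequality in its replacement form, $\mathbb E\Phi^2\le\tfrac12\sum_i\mathbb E\|Z_{\mathcal H}-Z_{\mathcal H^{(i)}}\|^2\le S_a/2$. Square-integrability justifies these steps.
\item \textbf{Deviation.} McDiarmid's inequality gives $\Pr(\Phi\ge\mathbb E\Phi+t)\le\exp(-2t^2/S_a)$.
\end{itemize}

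Take a union bound over the $|\mathcal N_s|$ net points and set the total failure probability to $\delta$. This gives
\[
t=\sqrt{\tfrac{S_a}{2}\bigl[\log|\mathcal N_s|+\log(1/\delta)\bigr]}\le\sqrt{\tfrac{S_a}{2}\bigl[r\log(1+2R/s)+\log(1/\delta)\bigr]}.
\]
Adding $b$, $\sqrt{S_a/2}$, $t$ and the Lipschitz slack $L_Zs$ reproduces $\overline\epsilon(R,s,\delta)$ in \eqref{eq:s2_history_bound}. The bound holds simultaneously for all $d\in\mathcal B_R$, as required.

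The main obstacle is the mean term with the sharp constant $\sqrt{S_a/2}$. A naive martingale bound gives $\sqrt{S_a}$. The vector-valued Efron--Stein bound requires writing the variance carefully through independent copies of each record, which is where the replacement-difference constants $a_i$ enter with the factor $\tfrac12$. I would try Efron--Stein first, and fall back on the cruder constant only if it fails. A secondary subtlety is that $s$ must be fixed before observing data, as the statement stipulates, so that the net is deterministic and the union bound is valid.
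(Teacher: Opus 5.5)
Your proposal is correct and follows essentially the same route as the paper's proof. Both use an $s$-net of the radius-$R$ ball of size at most $(1+2R/s)^r$, add the bias bound $b$, bound $\mathbb E\|Z_{\mathcal H}(d)-\mathbb E Z_{\mathcal H}(d)\|_2\le\sqrt{S_a/2}$ by vector Efron--Stein in replacement form, apply McDiarmid's tail $\exp(-2t^2/S_a)$, take a union bound over the net, and extend to the ball with the Lipschitz slack $L_Zs$. The only detail the paper adds is the degenerate case $S_a=0$, where McDiarmid's exponent is undefined: it treats this case separately as deterministic, and there $Z_{\mathcal H}(d)$ equals its mean, so your bound reduces to $b+L_Zs$.
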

\begin{proof}
For fixed $d$, vector Efron--Stein bounds the expected centered norm by
$\sqrt{S_a/2}$; McDiarmid gives tail probability $\exp(-2t^2/S_a)$
at $t$ above this mean \citep{DBLP:books/daglib/0035704}.
Apply a union bound over an $s$-net of size at most
$(1+2R/s)^r$, add $b$, and extend to the ball using $L_Z$-Lipschitzness.
The case $S_a=0$ is deterministic.
\end{proof}

\paragraph{Record-level scaling.}
For i.i.d. records, a data-independent subset of $n\leq N$ support
records, bounded output-length ratios, per-example code gradients
bounded by $B$, and gradient sensitivity $O(\kappa/N)$ to replacing a
record in conditioning and initialization, a replace-one argument yields
$S_a=O(\kappa^2/N+B^2/n)$ and
$b=b_{\mathrm{shift}}+O(\kappa/N+B/\sqrt n)$, where
$b_{\mathrm{shift}}$ denotes population gradient mismatch.

\paragraph{Routing stability.}
With bounded record features and centered encoding norms bounded away
from zero, history averaging changes scores by at most
$\Delta_s=O(1/N)$ under record replacement, and TopK sets are preserved
when each nontrivial selection gap exceeds $2\Delta_s$.
Otherwise routing jumps must enter the influence constants rather than
assuming the same $O(1/N)$ sensitivity across changes of selected indices.

\subsection{Finite-Step Adam and the Combined Bound}
\label{app:s2_adam}

\paragraph{Updates.}
For full-support Adam with fresh moments \citep{DBLP:journals/corr/KingmaB14},
$g_t=\nabla F_{\mathcal H}(c_{t-1})$ and
\begin{equation}
\begin{aligned}
m_t&=\beta_1m_{t-1}+(1-\beta_1)g_t,
&s_t&=\beta_2s_{t-1}+(1-\beta_2)g_t^{\odot2},\\
\hat m_t&=m_t/(1-\beta_1^t),
&\hat s_t&=s_t/(1-\beta_2^t),\\
M_t&=\operatorname{diag}((\sqrt{\hat s_t}+\varepsilon_{\mathrm A})^{-1}),
&c_t&=c_{t-1}-\alpha_tM_t\hat m_t.
\end{aligned}
\label{eq:s2_adam_update}
\end{equation}
Here $m_0=s_0=0$, $0<\beta_1,\beta_2<1$, and
$\varepsilon_{\mathrm A}>0$; elementwise operations are understood.
Learning rates $\alpha_t>0$ are fixed before target-user fitting.

\begin{lemma}[Deterministic displacement bound]
\label{lem:s2_adam_radius}
For every $t\leq T$,
\begin{equation}
\begin{aligned}
\|c_t-c_0\|_2&\leq R_T:=\sqrt r\sum_{j=1}^{T}\alpha_j\sqrt{\chi_j},\\
\chi_t&:=\frac{(1-\beta_1)^2(1-\beta_2^t)}
{(1-\beta_1^t)^2(1-\beta_2)}
\sum_{j=0}^{t-1}\left(\frac{\beta_1^2}{\beta_2}\right)^j.
\end{aligned}
\label{eq:s2_adam_radius}
\end{equation}
Each step also satisfies $\|c_t-c_{t-1}\|_2\leq\alpha_t\sqrt{r\chi_t}$.
\end{lemma}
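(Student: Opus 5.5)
The plan is to bound each coordinate of the Adam direction $M_t\hat m_t$ in absolute value by $\sqrt{\chi_t}$, whatever the gradients are. The per-step bound then follows by summing over the $r$ coordinates, and the displacement bound follows from the triangle inequality. The argument is purely algebraic. It uses no property of $F_{\mathcal H}$, which is why the lemma is deterministic and holds without any convergence assumption.

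\textbf{Step 1: unroll the moment recursions.} Since $m_0=s_0=0$, coordinatewise we have
\[
m_t=(1-\beta_1)\sum_{j=0}^{t-1}\beta_1^j g_{t-j},
\qquad
s_t=(1-\beta_2)\sum_{j=0}^{t-1}\beta_2^j g_{t-j}^{\odot 2}.
\]

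\textbf{Step 2: Cauchy--Schwarz, the key step.} For a fixed coordinate $k$, write $\beta_1^j=(\beta_1/\sqrt{\beta_2})^j\cdot\beta_2^{j/2}$ and apply Cauchy--Schwarz to the finite sum:
\[
|m_{t,k}|\leq(1-\beta_1)\Bigl(\sum_{j=0}^{t-1}(\beta_1^2/\beta_2)^j\Bigr)^{1/2}\Bigl(\sum_{j=0}^{t-1}\beta_2^j g_{t-j,k}^2\Bigr)^{1/2}
=(1-\beta_1)\Bigl(\sum_{j=0}^{t-1}(\beta_1^2/\beta_2)^j\Bigr)^{1/2}\sqrt{\frac{s_{t,k}}{1-\beta_2}}.
\]
This is the only nontrivial step, and the main thing to get right is the splitting of $\beta_1^j$. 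Because the sum is finite, no condition $\beta_1^2<\beta_2$ is needed.

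\textbf{Step 3: insert the bias corrections.} We have $|\hat m_{t,k}|=|m_{t,k}|/(1-\beta_1^t)$ and $\sqrt{\hat s_{t,k}}=\sqrt{s_{t,k}/(1-\beta_2^t)}$. Since $\varepsilon_{\mathrm A}>0$ only enlarges the denominator,
\[
\frac{|\hat m_{t,k}|}{\sqrt{\hat s_{t,k}}+\varepsilon_{\mathrm A}}\leq\frac{|\hat m_{t,k}|}{\sqrt{\hat s_{t,k}}}\leq\sqrt{\chi_t}.
\]
The bound holds whenever $s_{t,k}>0$. If $s_{t,k}=0$, then every $g_{t-j,k}$ vanishes, so $m_{t,k}=0$ and the coordinate of the step is zero. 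The degenerate case is therefore trivial.

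\textbf{Step 4: conclude.} Summing the squared coordinate bounds over the $r$ coordinates gives
\[
\|c_t-c_{t-1}\|_2=\alpha_t\|M_t\hat m_t\|_2\leq\alpha_t\sqrt{r\chi_t},
\]
which is the per-step claim. Telescoping $c_t-c_0=\sum_{j\leq t}(c_j-c_{j-1})$ and applying the triangle inequality gives $\|c_t-c_0\|_2\leq\sqrt r\sum_{j=1}^t\alpha_j\sqrt{\chi_j}$. Since all $\alpha_j>0$, this is at most $R_T$ for every $t\leq T$.
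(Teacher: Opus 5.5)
Your proof is correct and follows the paper's own argument: the paper applies the same weighted Cauchy--Schwarz to the moment sums to get $\hat m_{t,k}^2\leq\chi_t\hat s_{t,k}$ coordinatewise, notes the zero-moment case, and then sums squared coordinate bounds and step lengths. Your version adds the details the paper leaves implicit, namely the splitting $\beta_1^j=(\beta_1/\sqrt{\beta_2})^j\beta_2^{j/2}$ and the bias-correction bookkeeping.
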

\begin{proof}
Apply weighted Cauchy--Schwarz to the bias-corrected moment sums in
Equation~\ref{eq:s2_adam_update}: for each coordinate $k$,
$\hat m_{t,k}^2\leq\chi_t\hat s_{t,k}$.
The normalized step in that coordinate is therefore at most
$\alpha_t\sqrt{\chi_t}$, including the zero-moment case.
Sum squared coordinate bounds and then step lengths.
\end{proof}

\paragraph{Numerical scale and observables.}
For illustration, $(\beta_1,\beta_2)=(0.9,0.999)$, $r=8$, and $T=10$
give $1\leq\sqrt{\chi_t}<1.044$ for $1\leq t\leq10$, and
$R_{10}\simeq28.752\alpha$ for constant learning rate. The reported code learning rates $0.001$--$0.008$
(Table~\ref{tab:lineup_hparams}) yield bounds $0.029$--$0.230$
under these moment settings. Actual $\|d_T\|_2$ and $\Delta_T$ are
observable, but query improvement still requires control of
$\epsilon_{\mathcal H}(R_T)$; the numerical bounds are not measurements.

\begin{proposition}[Progress along Adam]
\label{prop:s2_adam_progress}
If $F_{\mathcal H}$ has an $L_F$-Lipschitz gradient on the trajectory
ball, then
\begin{equation}
\Delta_T\geq\Gamma_T
:=\sum_{t=1}^{T}
\left[\alpha_tg_t^\top M_t\hat m_t
-\frac{L_F\alpha_t^2}{2}\|M_t\hat m_t\|_2^2\right].
\label{eq:s2_adam_progress}
\end{equation}
\end{proposition}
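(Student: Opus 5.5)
The plan is to apply the standard descent lemma step by step along the Adam trajectory and then telescope. The trajectory-ball hypothesis is what makes each step eligible. By Lemma~\ref{lem:s2_adam_radius}, every iterate satisfies $\|c_t-c_0\|_2\leq R_T$. The ball of radius $R_T$ around $c_0$ is convex, so each segment $[c_{t-1},c_t]$ lies in it. Hence the $L_F$-Lipschitz gradient bound for $F_{\mathcal H}$ holds on every segment we integrate along, and I read the ``trajectory ball'' as exactly this radius-$R_T$ ball.

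Fix a step $t$ and write $p_t=M_t\hat m_t$, so that $c_t=c_{t-1}-\alpha_tp_t$ by Equation~\ref{eq:s2_adam_update}. The fundamental theorem of calculus along the segment gives
\[
F_{\mathcal H}(c_t)-F_{\mathcal H}(c_{t-1})
=-\alpha_t\int_0^1\nabla F_{\mathcal H}(c_{t-1}-\tau\alpha_tp_t)^\top p_t\,d\tau .
\]
Next I add and subtract $g_t=\nabla F_{\mathcal H}(c_{t-1})$ inside the integral. Cauchy--Schwarz and the Lipschitz bound $\|\nabla F_{\mathcal H}(c_{t-1}-\tau\alpha_tp_t)-g_t\|_2\leq L_F\tau\alpha_t\|p_t\|_2$ then control the correction term. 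Integrating $\tau$ over $[0,1]$ yields
\[
F_{\mathcal H}(c_{t-1})-F_{\mathcal H}(c_t)\geq\alpha_tg_t^\top p_t-\frac{L_F\alpha_t^2}{2}\|p_t\|_2^2 .
\]
This is the per-step inequality. Note that $g_t^\top p_t$ need not be positive, since Adam's preconditioned momentum direction is not a descent direction in general. The bound is still a valid lower bound as stated, and no sign claim is needed.

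Summing the per-step inequality from $t=1$ to $T$ telescopes the left side to $F_{\mathcal H}(c_0)-F_{\mathcal H}(c_T)=\Delta_T$, and the right side becomes $\Gamma_T$ in Equation~\ref{eq:s2_adam_progress}. The only delicate point is justifying that the Lipschitz hypothesis applies on each full segment, not merely at the iterates. Convexity of the ball together with Lemma~\ref{lem:s2_adam_radius} settles this. Differentiability along segments is implicit in assuming a Lipschitz gradient on the ball, so I expect no further obstacle; the argument is otherwise routine.
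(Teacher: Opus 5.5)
Your proposal is correct and follows the paper's argument: apply the $L_F$-smoothness (descent) inequality to each Adam update $c_t=c_{t-1}-\alpha_tM_t\hat m_t$, then sum and telescope to get $\Delta_T\geq\Gamma_T$. The paper leaves implicit that each whole segment stays where the Lipschitz bound holds, and you justify this correctly via Lemma~\ref{lem:s2_adam_radius} and convexity of the radius-$R_T$ ball.
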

\begin{proof}
Apply the smoothness inequality to each update in
Equation~\ref{eq:s2_adam_update} and sum.
\end{proof}
\noindent\emph{Momentum remark.}
If $\alpha_tL_F\|M_t\|_{\mathrm{op}}\leq1$, the per-step decrease is
at least $\alpha_t(\|g_t\|_{M_t}^2-\|e_t\|_{M_t}^2)/2$, where
$e_t=\hat m_t-g_t$ and $\|x\|_M^2=x^\top Mx$.
As $e_t$ is a convex combination of previous gradient differences,
smoothness and Lemma~\ref{lem:s2_adam_radius} bound its norm through
preceding step lengths.

\begin{corollary}[Finite-history risk of S2]
\label{cor:s2_finite_history}
Under Propositions~\ref{prop:s2_history_bound} and
\ref{prop:s2_adam_progress} with $R=R_T$, set
$\bar\epsilon_T=\overline\epsilon(R_T,s,\delta)$.
For $\rho>0$, with probability at least $1-\delta$,
\begin{equation}
\mathcal R_{\mathcal H}(c_T)-\mathcal R_{\mathcal H}^{\star}
\leq\mathcal A_{\mathcal H}+\mathcal I_{\mathcal H}-\Gamma_T
+\frac{\bar\epsilon_T^2}{2\rho}.
\label{eq:s2_finite_history_risk}
\end{equation}
The sharper bound with $\Delta_T$ replacing $\Gamma_T$ also holds;
in particular, fitting improves on initialization if
$\Delta_T+\rho\|d_T\|_2^2/2>\bar\epsilon_T\|d_T\|_2$.
For $\rho=0$, the last term in
Equation~\ref{eq:s2_finite_history_risk} becomes $\bar\epsilon_TR_T$.
\end{corollary}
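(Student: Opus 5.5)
The argument chains Lemma~\ref{lem:s2_adam_radius}, Proposition~\ref{prop:s2_history_bound}, Proposition~\ref{prop:s2_pathwise} and Proposition~\ref{prop:s2_adam_progress}. No new analysis should be needed beyond choosing the radius correctly.

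\textbf{Step 1: fix the radius and the high-probability event.} Lemma~\ref{lem:s2_adam_radius} is deterministic and holds for every realization of the history. So $\|d_T\|_2\leq R_T$ always, with $R_T$ fixed before fitting, since the learning rates and moment parameters are data independent. Because $R_T$ does not depend on $\mathcal H$, Proposition~\ref{prop:s2_history_bound} applies with $R=R_T$. On an event $\Omega$ of probability at least $1-\delta$, it gives $\epsilon_{\mathcal H}(R_T)\leq\bar\epsilon_T$. Everything afterwards is pathwise on $\Omega$. The segment from $c_0$ to $c_T$ lies in the radius-$R_T$ ball, so the differentiability and Lipschitz hypotheses are available there.

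\textbf{Step 2: apply the two pathwise propositions.} On $\Omega$, Proposition~\ref{prop:s2_pathwise} with $R=R_T$ and $\rho>0$ gives the sharper bound
\[
\mathcal R_{\mathcal H}(c_T)-\mathcal R^\star_{\mathcal H}\leq\mathcal A_{\mathcal H}+\mathcal I_{\mathcal H}-\Delta_T+\frac{\epsilon_{\mathcal H}(R_T)^2}{2\rho}.
\]
Here we replace $\epsilon_{\mathcal H}(R_T)$ by the larger $\bar\epsilon_T$. Proposition~\ref{prop:s2_adam_progress} gives $\Delta_T\geq\Gamma_T$ on the trajectory ball, and substituting yields Equation~\ref{eq:s2_finite_history_risk}.

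\textbf{Step 3: the improvement condition and the case $\rho=0$.} Here I would not use the final Young step, but go back to the intermediate inequality inside the proof of Proposition~\ref{prop:s2_pathwise}. That inequality reads
\[
\mathcal R_{\mathcal H}(c_T)\leq\mathcal R_{\mathcal H}(c_0)-\Delta_T-\frac{\rho}{2}\|d_T\|_2^2+\epsilon_{\mathcal H}(R_T)\|d_T\|_2 .
\]
It follows from integrating $E_{\mathcal H}$ along the segment and using the identity for $\widehat L_{\mathcal H}(c_T)-\widehat L_{\mathcal H}(c_0)$. Since $\mathcal R_{\mathcal H}(c_0)-\mathcal R^\star_{\mathcal H}=\mathcal A_{\mathcal H}+\mathcal I_{\mathcal H}$, the risk at $c_T$ is strictly below the risk at $c_0$ whenever $\Delta_T+\rho\|d_T\|_2^2/2>\bar\epsilon_T\|d_T\|_2$.

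For $\rho=0$, the same inequality gives an excess term of at most $\bar\epsilon_T\|d_T\|_2\leq\bar\epsilon_TR_T$. The Young step $-\rho x^2/2+\epsilon x\leq\epsilon^2/(2\rho)$ is simply skipped. This case needs care: Proposition~\ref{prop:s2_pathwise} is stated only for $\rho>0$, so I would cite its proof rather than its statement. I would also note that $\Delta_T\geq\Gamma_T$ still holds, since smoothness of $F_{\mathcal H}=\widehat L_{\mathcal H}$ is assumed.

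\textbf{Main obstacle.} The main obstacle is justifying the use of Proposition~\ref{prop:s2_history_bound} at a radius tied to the random iterate $c_T$, which depends on the same history as the discrepancy. The uniform-over-the-ball form of the bound, with the deterministic radius $R_T$, is exactly what removes this dependence. So the plan is to stress that no union bound over the trajectory and no independence between fitting and $Z_{\mathcal H}$ is required. The remaining risk is technical: checking that Proposition~\ref{prop:s2_history_bound}'s Lipschitz and bounded-difference assumptions are imposed on the same ball of radius $R_T$.
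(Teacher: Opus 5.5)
Your proposal is correct and follows the paper's proof. The deterministic radius $R_T$ from Lemma~\ref{lem:s2_adam_radius} lets the uniform discrepancy bound of Proposition~\ref{prop:s2_history_bound} apply to the history-dependent iterate, and then the intermediate inequality from the proof of Proposition~\ref{prop:s2_pathwise}, combined with $\Delta_T\geq\Gamma_T$, gives the risk bound, the improvement criterion, and the $\rho=0$ variant, exactly as the paper's sketch indicates.
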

\begin{proof}
Lemma~\ref{lem:s2_adam_radius} supplies a deterministic ball, so the
uniform discrepancy bound applies to its history-dependent output.
Use the proof of Proposition~\ref{prop:s2_pathwise} and
$\Delta_T\geq\Gamma_T$.
\end{proof}

\subsection{Correction Geometry and Transfer}
\label{app:s2_quadratic}
\label{app:s2_transfer}

\paragraph{Quadratic specialization.}
Let $q(v)=q(v^\star)+\|v-v^\star\|_H^2/2$ with $H\succ0$, and
fix $Q=V^\top HV$. The optimal displacement from initialization is
$d^\star=Q^\dagger V^\top H(v^\star-Vc_0)$, where $\dagger$ denotes
the Moore--Penrose pseudoinverse, and define
$\mathcal A_q(V)=\inf_cq(Vc)-q(v^\star)$.
Consider a support model with the same curvature,
$\widehat q(d)=\mathrm{const}-(Qd^\star+\xi)^\top d+d^\top Qd/2$,
and its exact regularized solution
$\bar d=\arg\min_d\{\widehat q(d)+\rho\|d\|_2^2/2\}$, $\rho>0$.
Write $q_j>0$ for the positive eigenvalues of $Q$ and $d_j^\star,\xi_j$
for the corresponding coordinates.

\begin{corollary}[Geometry-controlled refinement]
\label{cor:s2_effective_dimension}
With finite second moments, suppose
$\mathbb E\xi=0$,
$\mathbb E[\xi\xi^\top]=\sigma^2Q/n$, and
$\mathbb E[d_j^\star\xi_j]=0$ for every $q_j>0$.\footnote{Without decorrelation, add
$-\rho\sum_{j:q_j>0}q_j\mathbb E[d_j^\star\xi_j]/(q_j+\rho)^2$
to the risk; the directional criterion then changes.}
Then
\begin{equation}
\begin{aligned}
\mathbb E[q(V(c_0+\bar d))-q(v^\star)]
={}&\mathbb E\mathcal A_q(V)
+\frac12\sum_{j:q_j>0}
q_j\left(\frac{\rho}{q_j+\rho}\right)^2\mathbb E[(d_j^\star)^2]
+\frac{\sigma^2}{2n}d_{\mathrm{eff}}(\rho),\\
d_{\mathrm{eff}}(\rho)
:={}&\sum_{j:q_j>0}\left(\frac{q_j}{q_j+\rho}\right)^2
\leq\operatorname{rank}(V)\leq r.
\end{aligned}
\label{eq:s2_effective_dimension}
\end{equation}
Refinement reduces expected query error in direction $j$, relative to
retaining initialization, exactly when
\begin{equation}
(q_j+2\rho)\,\mathbb E[(d_j^\star)^2]>\frac{\sigma^2}{n}.
\label{eq:s2_directional_refinement}
\end{equation}
\end{corollary}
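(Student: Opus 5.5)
The plan is to reduce everything to an exact bias--variance computation in the eigenbasis of $Q=V^\top HV$. It has three steps: a Pythagorean decomposition of the query excess risk, the closed form of the ridge solution $\bar d$, and a coordinatewise expectation.

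\emph{Step 1 (Pythagorean split).} Write $w=v^\star-Vc_0$. Then $q(V(c_0+d))-q(v^\star)=\tfrac12\|Vd-w\|_H^2$. I will show that $d^\star=Q^\dagger V^\top Hw$ satisfies the normal equations $V^\top H(w-Vd^\star)=0$. This requires $QQ^\dagger V^\top H w=V^\top Hw$, i.e.\ $\operatorname{range}(V^\top H)\subseteq\operatorname{range}(Q)$. That inclusion holds because $H\succ0$ gives $\ker Q=\ker V$, hence $\operatorname{range}(Q)=\operatorname{range}(V^\top)$, and $\operatorname{range}(V^\top H)=\operatorname{range}(V^\top)$ since $H$ is invertible. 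With the normal equations in hand, the cross term vanishes and
\[
q(V(c_0+d))-q(v^\star)=\mathcal A_q(V)+\tfrac12(d-d^\star)^\top Q(d-d^\star),
\]
where $\mathcal A_q(V)=\tfrac12\|w-Vd^\star\|_H^2$. This is the step I expect to need the most care, because it carries both the pseudoinverse range argument and the fact that $\mathcal A_q$ depends on $c_0$, hence is random, which is why its expectation appears. It also shows that $d^\star\in\operatorname{range}(Q)$.

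\emph{Step 2 (closed form).} The regularized support objective is strictly convex for $\rho>0$. Setting its gradient to zero gives $\bar d=(Q+\rho I)^{-1}(Qd^\star+\xi)$. Null directions of $Q$ do not matter: there $\mathbb E[\xi\xi^\top]=\sigma^2Q/n$ forces $\xi_j=0$ almost surely, and in any case the risk weights them by $q_j=0$. In a positive eigendirection,
\[
\bar d_j-d^\star_j=\frac{\xi_j-\rho d^\star_j}{q_j+\rho}.
\]

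\emph{Step 3 (expectation and directional criterion).} I plug Step 2 into the quadratic form of Step 1. Taking expectations, I use $\mathbb E\xi_j^2=\sigma^2q_j/n$ and the decorrelation $\mathbb E[d_j^\star\xi_j]=0$; this yields the bias term $\tfrac12 q_j\rho^2\mathbb E(d_j^\star)^2/(q_j+\rho)^2$ and the variance term $\tfrac{\sigma^2}{2n}q_j^2/(q_j+\rho)^2$. Summing over $j$ gives Equation~\ref{eq:s2_effective_dimension}. If decorrelation is dropped, the retained cross term is exactly the footnote's correction.

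For the dimension bound, each summand of $d_{\mathrm{eff}}(\rho)$ is at most $1$, and the number of positive $q_j$ is $\operatorname{rank}Q=\operatorname{rank}V\le r$.

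For the directional criterion, retaining initialization ($d=0$) contributes $\tfrac12 q_j\mathbb E(d_j^\star)^2$ in direction $j$. I compare this with the refined contribution, multiply through by $2(q_j+\rho)^2/q_j$, and cancel the $\rho^2$ term. This leaves $(q_j^2+2\rho q_j)\mathbb E(d_j^\star)^2>\sigma^2q_j/n$; dividing by $q_j>0$ gives Equation~\ref{eq:s2_directional_refinement}. Since both sides are exact expectations, the criterion is an equivalence.
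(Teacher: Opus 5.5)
Your proposal is correct and follows essentially the same route as the paper's proof. Both use the weighted-projection split $q(V(c_0+d))-q(v^\star)=\mathcal A_q(V)+\|d-d^\star\|_Q^2/2$, the regularized normal equation $\bar d-d^\star=(Q+\rho I)^{-1}(\xi-\rho d^\star)$, diagonalization of $Q$ with the stated moments, and subtraction of the refined directional risk from the initialized one; your range argument for $Q^\dagger$ and your handling of the null directions just fill in details the paper leaves implicit.

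One small slip in a side remark: $\mathcal A_q(V)=\inf_c q(Vc)-q(v^\star)$ does not depend on $c_0$. It is an infimum over all codes, and your residual $w-Vd^\star$ equals the $H$-orthogonal residual of $v^\star$ itself, because $Vc_0\in\operatorname{range}(V)$. So any randomness behind $\mathbb E\mathcal A_q(V)$ comes from $v^\star$, not from $c_0$; this does not affect the argument.
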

\begin{proof}
The regularized normal equation gives
$\bar d-d^\star=(Q+\rho I)^{-1}(\xi-\rho d^\star)$.
Weighted projection gives
$q(V(c_0+d))-q(v^\star)=\mathcal A_q(V)+\|d-d^\star\|_Q^2/2$.
Substitute the code error, diagonalize $Q$, and apply the stated moments.
Each effective-dimension summand is
at most one, and $\operatorname{rank}(Q)=\operatorname{rank}(V)$.
Subtracting refined from initialized directional risk gives
$\frac{q_j^2}{2(q_j+\rho)^2}
[(q_j+2\rho)\mathbb E[(d_j^\star)^2]-\sigma^2/n]$,
proving the criterion.
\end{proof}

\paragraph{Transfer through the learned maps.}
At $v_0=Vc_0$, let $g_q$ and $g_s$ be negative representation gradients
of query and support NLL: fresh Adam moments and the zero regularizer
gradient give $c_1-c_0=\alpha_1M_1V^\top g_s$.
For $L_R$-smooth query risk, its decrease is at least
$\alpha_1g_q^\top VM_1V^\top g_s
-L_R\alpha_1^2\|M_1V^\top g_s\|_2^2/2$.
Thus the positive semidefinite, rank-at-most-$r$ operator $VM_1V^\top$
governs transfer, connecting Observation~\hyperref[obs:correcting]{3}
to learned correction geometry rather than an orthogonal projector.

\paragraph{Shared learning.}
On source episodes,
$\mathcal R_{\mathcal H}(c_0)=\mathcal R_{\mathcal H}^{\star}
+\mathcal A_{\mathcal H}+\mathcal I_{\mathcal H}$,
so the shared S2 population objective targets approximation and
initialization error jointly, without a support-fitting inner loop.

\paragraph{Scope.}
The finite-step NLL bounds use the stated smoothness and influence
conditions, not Adam convergence.
Independent records are needed only for concentration, not the pathwise
comparison or displacement bound.
The quadratic corollary additionally uses common curvature and stated
moments; it does not predict an optimal code size or gains in every metric.

%% file: Appendix/E_Supplementary_Materials_for_Experimental_Setup.tex
\section{Supplementary Materials for Experimental Setup}
\label{app:experimental_setup}

\subsection{Datasets}
\label{app:datasets}

We use six LaMP tasks \citep{salemi-etal-2024-lamp} under OPPU's protocol of data processing, target-user partition, and prompt construction \citep{tan-etal-2024-democratizing}. Table~\ref{tab:dataset_stats} summarizes their statistics. Following prior OPPU-based evaluations \citep{ICLR2026_7a1ac10e, li-etal-2026-mta}, we exclude LaMP-1 because outputs under the OPPU prompt and Llama-2-7B frequently violate the required citation format, hindering reliable evaluation; we omit LaMP-6 because of data privacy concerns. The remaining six datasets still ensure task diversity.

We further partition the source users into fixed 90/10 training/validation sets. These sets and the target-user cohort are mutually disjoint, and the split is unchanged across seeds. Target-user histories are used for personalization, while their held-out queries are reserved for final evaluation.

\begin{table}[t]
    \setlength{\tabcolsep}{3.8pt}
    \renewcommand{\arraystretch}{1.12}

    \caption{Statistics of the six LaMP tasks. Avg.\ history is the mean number of history records per target user.}
    \label{tab:dataset_stats}

    \begin{center}
    \resizebox{\linewidth}{!}{%
        \begin{tabular}{@{}llcccccc@{}}
            \toprule
            \textbf{Dataset}
            & \textbf{Task}
            & \textbf{Type}
            & \shortstack{\textbf{Users}\\\textbf{(source / target)}}
            & \shortstack{\textbf{Source}\\\textbf{queries}}
            & \shortstack{\textbf{Avg.}\\\textbf{history}}
            & \shortstack{\textbf{Target}\\\textbf{queries}}
            & \shortstack{\textbf{\#}\\\textbf{classes}} \\
            \midrule

            LaMP-2M & Movie tagging & Classification
            & 829 / 100 & 3,181 & 55.6 & 3,302 & 15 \\

            LaMP-2N & News categorization & Classification
            & 274 / 49 & 3,662 & 219.9 & 6,033 & 15 \\

            LaMP-3 & Product rating & Prediction
            & 19,899 / 101 & 22,388 & 959.8 & 112 & 5 \\

            LaMP-4 & News headline generation & Generation
            & 1,543 / 100 & 7,275 & 270.1 & 6,725 & -- \\

            LaMP-5 & Scholarly title generation & Generation
            & 14,581 / 101 & 16,075 & 442.9 & 107 & -- \\

            LaMP-7 & Tweet paraphrasing & Generation
            & 13,337 / 100 & 14,826 & 121.2 & 109 & -- \\

            \bottomrule
        \end{tabular}%
    }
    \end{center}
\end{table}

\subsection{Evaluation Metrics}
\label{app:metrics}

\begin{table}[t]
\setlength{\tabcolsep}{4.2pt}
\renewcommand{\arraystretch}{1.12}

\caption{
Evaluation metrics used across the six LaMP tasks. \textbf{Acc} = Accuracy, \textbf{F1} = Macro-F1, \textbf{R-1} = ROUGE-1, and \textbf{R-L} = ROUGE-L. Here, $P_c$ and $R_c$ denote precision and recall for class $c$; $P_{1,i}$ and $R_{1,i}$ denote unigram precision and recall for the
$i$-th example; and $P_{L,i}$ and $R_{L,i}$ denote the corresponding precision and recall based on the longest common subsequence.
}
\label{tab:metrics}
\begin{center}
\begin{tabular}{@{}llll@{}}
\toprule
\textbf{Metric}
& \textbf{Datasets}
& \textbf{Task type}
& \textbf{Formulation} \\
\midrule

\textbf{Acc} $\uparrow$
& LaMP-2M, LaMP-2N
& Classification
& $\displaystyle
\frac{1}{n}\sum_{i=1}^{n}\mathbb{I}[\hat y_i=y_i]
$ \\[4pt]

\textbf{F1} $\uparrow$
& LaMP-2M, LaMP-2N
& Classification
& $\displaystyle
\frac{1}{C}\sum_{c=1}^{C}
\frac{2P_cR_c}{P_c+R_c}
$ \\[6pt]

\textbf{MAE} $\downarrow$
& LaMP-3
& Rating prediction
& $\displaystyle
\frac{1}{n}\sum_{i=1}^{n}|y_i-\hat y_i|
$ \\[6pt]

\textbf{RMSE} $\downarrow$
& LaMP-3
& Rating prediction
& $\displaystyle
\sqrt{\frac{1}{n}\sum_{i=1}^{n}(y_i-\hat y_i)^2}
$ \\[7pt]

\textbf{R-1} $\uparrow$
& LaMP-4, LaMP-5, LaMP-7
& Generation
& $\displaystyle
\frac{1}{n}\sum_{i=1}^{n}
\frac{2P_{1,i}R_{1,i}}{P_{1,i}+R_{1,i}}
$ \\[7pt]

\textbf{R-L} $\uparrow$
& LaMP-4, LaMP-5, LaMP-7
& Generation
& $\displaystyle
\frac{1}{n}\sum_{i=1}^{n}
\frac{2P_{L,i}R_{L,i}}{P_{L,i}+R_{L,i}}
$ \\

\bottomrule
\end{tabular}
\end{center}
\end{table}

All methods are scored by the same evaluator using the metrics in Table~\ref{tab:metrics}.

For generation, we report ROUGE-1 and ROUGE-L F-measures averaged over examples with one reference, using \texttt{rouge-score} with Porter stemming and \texttt{rougeL}. For LaMP-3, numeric predictions are parsed as floats without rounding or clipping. Unparsable outputs are assigned the farther endpoint in $\{1,5\}$ from the reference rating, following the official LaMP evaluator.

\subsection{Baselines}
\label{app:baseline_impl}

We use baselines' official implementations. Their personalization algorithms are retained under the common Llama-2-7B backbone and OPPU target-user protocol. 

Source-data construction and shared training remain method-specific to ensure fairness. OPPU and PerFit perform task adaptation on the full source pool. P2P trains its hypernetwork on all source users, conditioning adapter generation on user profiles and query-retrieved history. Per-Pcs and MTA retain their released selection procedures, choosing sharers or anchors from up to 1,000 candidates drawn from the full source partition. RAG retrieves the top user-history record using BM25.

Per-Pcs and MTA provide no complete released configuration for LaMP-2N. MTA also excludes LaMP-7 because its adaptation-data construction requires query--answer pairs. These entries are marked as NA.

\subsection{Implementation}
\label{app:implementation}

We learn LINEUP's shared components on source-training users and select hyperparameters on source-validation users.
Selection uses the arithmetic mean of the two task metrics, maximized for classification and generation and minimized for LaMP-3. The selected configurations are fixed before final target-user evaluation.

We report mean $\pm$ standard deviation over three runs. LINEUP and baselines are independently trained with three seeds.

\subsection{Hyperparameters}
\label{app:hyperparameters}

Across tasks, LINEUP uses $K=32$ rank-4 LoRA factors over the query and
value projections of all transformer layers, with 256-dimensional history
and query representations.
Target-user fitting updates only an eight-dimensional code for 10 Adam steps. Table~\ref{tab:lineup_hparams} lists
detailed settings.

\begin{table}[t]
    \setlength{\tabcolsep}{4.0pt}
    \renewcommand{\arraystretch}{1.12}

    \caption{LINEUP hyperparameters.
    ``F7+L7'' selects the first and last seven token positions; ``L7'' selects
    the last seven.}
    \label{tab:lineup_hparams}

    \begin{center}
    \resizebox{\linewidth}{!}{%
        \begin{tabular}{@{}lcccccc@{}}
            \toprule
            \textbf{Hyperparameter}
            & \textbf{LaMP-2M}
            & \textbf{LaMP-2N}
            & \textbf{LaMP-3}
            & \textbf{LaMP-4}
            & \textbf{LaMP-5}
            & \textbf{LaMP-7} \\
            \midrule

            Candidate / active factors
            & 4 / 4 & 4 / 4 & 4 / 4 & 16 / 4 & 16 / 4 & 16 / 4 \\

            Query strength $\lambda$
            & 2 & 2 & 2 & 8 & 8 & 8 \\

            S1 learning rate
            & $2{\times}10^{-4}$
            & $1{\times}10^{-4}$
            & $2{\times}10^{-4}$
            & $5{\times}10^{-5}$
            & $2{\times}10^{-4}$
            & $2{\times}10^{-4}$ \\

            S2 layers
            & 14, 15 & 14--16 & 14, 15 & 14--16 & 15 & 15, 16 \\

            S2 positions
            & L7 & F7+L7 & L7 & F7+L7 & L7 & F7+L7 \\

            S2 shared learning rate
            & $2.25{\times}10^{-3}$
            & $1{\times}10^{-3}$
            & $2.25{\times}10^{-3}$
            & $1{\times}10^{-3}$
            & $1{\times}10^{-3}$
            & $1{\times}10^{-3}$ \\

            User-code learning rate
            & $2{\times}10^{-3}$
            & $2{\times}10^{-3}$
            & $4{\times}10^{-3}$
            & $8{\times}10^{-3}$
            & $1{\times}10^{-3}$
            & $1{\times}10^{-3}$ \\

            \bottomrule
        \end{tabular}%
    }
    \end{center}
\end{table}

%% file: Appendix/F_Supplementary_Experimental_Results.tex
\section{Supplementary Experimental Results}
\label{app:supplementary_results}

\subsection{Long-Form Generation on LongLaMP}
\label{app:longlamp}

We extend our evaluation to long-form personalized generation on three LongLaMP tasks \citep{kumar2024longlampbenchmarkpersonalizedlongform}:
LongLaMP-2 (Abstract Generation), LongLaMP-3 (Product Review Generation), and LongLaMP-4 (Topic Writing).

\subsubsection{Experimental Setup}
\label{app:longlamp_datasets}

We first consolidate the official LongLaMP Temporal release at the user level, yielding one available history and a set of held-out queries for each user. We select users with the longest histories as the target cohort. The remaining users form the source pool and are eligible for LINEUP's training/validation split. User-disjoint splitting and model selection follow Appendix~\ref{app:datasets} and Appendix~\ref{app:implementation}. LongLaMP-1 (Email Completion) is omitted because its underlying Avocado email corpus requires licensed access.

For history-derived supervision, Abstract Generation uses title--abstract pairs, Product Review Generation uses ratings, descriptions, and summaries to predict review text, and Topic Writing uses summary--content pairs. Table~\ref{tab:longlamp_dataset_stats} summarizes the resulting datasets.

For this supplementary evaluation, we compare Non, Basic, RAG, and OPPU under the same user split and evaluation protocol. These cover the pretrained model, task adaptation, retrieval-based personalization, and full private user adapters. Extending PerFit, Per-Pcs, and MTA would require substantial LongLaMP-specific adaptation. P2P reports LongLaMP results under a different protocol, so aligning it with our user split would require rebuilding user representations and retraining. We therefore retain a focused comparison, with OPPU providing the direct private-adapter reference central to our research question.

\begin{table}[t]
    \setlength{\tabcolsep}{3.8pt}
    \renewcommand{\arraystretch}{1.12}

    \caption{Statistics of the three LongLaMP tasks. Avg.\ history is the mean number of history records per target user.}
    \label{tab:longlamp_dataset_stats}

    \begin{center}
    \resizebox{\linewidth}{!}{%
        \begin{tabular}{@{}llcccc@{}}
            \toprule
            \textbf{Dataset}
            & \textbf{Task}
            & \shortstack{\textbf{Users}\\\textbf{(source / target)}}
            & \shortstack{\textbf{Source}\\\textbf{queries}}
            & \shortstack{\textbf{Avg.}\\\textbf{history}}
            & \shortstack{\textbf{Target}\\\textbf{queries}} \\
            \midrule

            LongLaMP-2
            & Abstract generation
            & 22,620 / 101
            & 31,663
            & 1,291.3
            & 145 \\

            LongLaMP-3
            & Product review generation
            & 15,997 / 100
            & 19,561
            & 560.5
            & 128 \\

            LongLaMP-4
            & Topic writing
            & 16,136 / 100
            & 20,975
            & 260.6
            & 132 \\

            \bottomrule
        \end{tabular}%
    }
    \end{center}
\end{table}

\subsubsection{Results on LongLaMP}
\label{app:longlamp_results}

Table~\ref{tab:main_results_longlamp} shows that \textbf{LINEUP achieves the best mean performance on all six metrics across the three tasks}, with OPPU providing the strongest baseline throughout. The largest gains occur on Product Review Generation, where LINEUP improves over OPPU by
$0.056$ in ROUGE-1 and $0.048$ in ROUGE-L.

The advantage over OPPU extends our capacity-allocation findings to long-form generation. Jointly learned factors provide reusable personalization capacity, history--query composition tailors it to each request, and the compact user code supplies residual individual correction. Thus, long-form personalization can also benefit from rich shared capacity without requiring a complete private adapter for every user.

\begin{table}[t]
\setlength{\tabcolsep}{3.6pt}
\renewcommand{\arraystretch}{1.12}

\caption{Performance on LongLaMP-2, LongLaMP-3, and LongLaMP-4.
\textbf{R-1} = ROUGE-1 and \textbf{R-L} = ROUGE-L. \textbf{Bold} and \underline{underline} denote the best and second-best means. $\Delta$ reports absolute improvements over the strongest baseline. LINEUP leads on all metrics for long-form generation.}
\label{tab:main_results_longlamp}
\begin{center}
\begin{tabular}{@{}lcccccc@{}}
\toprule
& \multicolumn{2}{c}{\textbf{LongLaMP-2}}
& \multicolumn{2}{c}{\textbf{LongLaMP-3}}
& \multicolumn{2}{c}{\textbf{LongLaMP-4}} \\
\cmidrule(lr){2-3}
\cmidrule(lr){4-5}
\cmidrule(lr){6-7}

\textbf{Method}
& \textbf{R-1} $\uparrow$
& \textbf{R-L} $\uparrow$
& \textbf{R-1} $\uparrow$
& \textbf{R-L} $\uparrow$
& \textbf{R-1} $\uparrow$
& \textbf{R-L} $\uparrow$ \\
\midrule

Non
& $.195 \pm .003$
& $.121 \pm .002$
& $.143 \pm .010$
& $.082 \pm .004$
& $.174 \pm .005$
& $.090 \pm .002$ \\

Basic
& $.367 \pm .010$
& $.194 \pm .006$
& $.296 \pm .001$
& $.112 \pm .001$
& $.233 \pm .009$
& $.103 \pm .002$ \\

RAG
& $.381 \pm .005$
& $.211 \pm .002$
& $.322 \pm .005$
& $.135 \pm .003$
& $.242 \pm .006$
& $.114 \pm .004$ \\

OPPU
& $\underline{.412} \pm .003$
& $\underline{.244} \pm .004$
& $\underline{.346} \pm .002$
& $\underline{.157} \pm .001$
& $\underline{.260} \pm .003$
& $\underline{.123} \pm .001$ \\

\rowcolor{cyan!10}
\textbf{LINEUP}
& $\mathbf{.435} \pm .006$
& $\mathbf{.263} \pm .005$
& $\mathbf{.402} \pm .004$
& $\mathbf{.205} \pm .001$
& $\mathbf{.289} \pm .003$
& $\mathbf{.136} \pm .002$ \\

\rowcolor{cyan!10}
$\Delta$
& $\textbf{\textcolor{tablegreen}{+0.023}}$
& $\textbf{\textcolor{tablegreen}{+0.019}}$
& $\textbf{\textcolor{tablegreen}{+0.056}}$
& $\textbf{\textcolor{tablegreen}{+0.048}}$
& $\textbf{\textcolor{tablegreen}{+0.029}}$
& $\textbf{\textcolor{tablegreen}{+0.013}}$ \\

\bottomrule
\end{tabular}
\end{center}
\end{table}

\subsection{Additional Cold-Start Results}
\label{app:coldstart_results}

\begin{figure}[t]
    \centering

    \begin{minipage}[t]{0.32\linewidth}
        \centering
        \includegraphics[width=\linewidth]{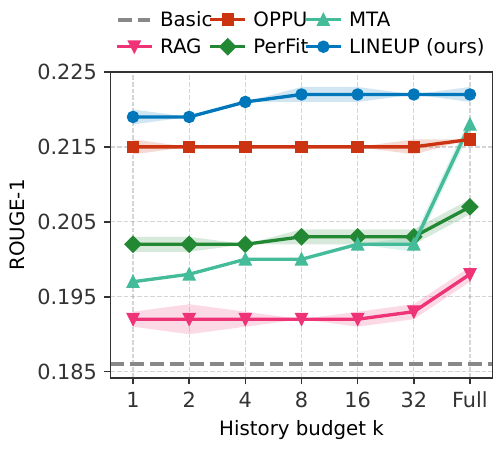}\\[-1mm]
        (a) LaMP-4 ROUGE-1
    \end{minipage}\hfill%
    \begin{minipage}[t]{0.32\linewidth}
        \centering
        \includegraphics[width=\linewidth]{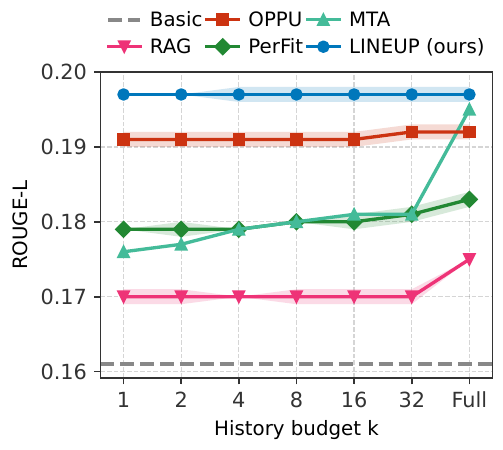}\\[-1mm]
        (b) LaMP-4 ROUGE-L
    \end{minipage}\hfill%
    \begin{minipage}[t]{0.32\linewidth}
        \centering
        \includegraphics[width=\linewidth]{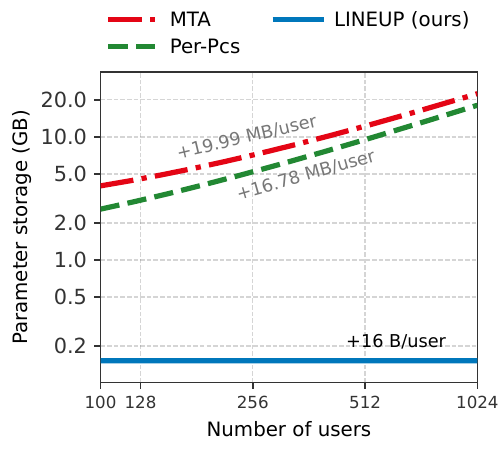}\\[-1mm]
        (c) LaMP-3 storage
    \end{minipage}

    \caption{Supplementary cold-start and adaptation-storage results.
    (a) One interaction nearly recovers LINEUP's full-history ROUGE-1.
    (b) LINEUP maintains the highest ROUGE-L across history budgets.
    (c) LINEUP's storage grows far more slowly than Per-Pcs and MTA.
    }
    \label{fig:coldstart_efficiency_supp}
\end{figure}

\paragraph{Protocol.}
We evaluate LaMP-2N and LaMP-4 using nested prefixes of each user's earliest
$k\in\{1,2,4,8,16,32,\mathrm{Full}\}$ historical interactions. Target
users and evaluation queries remain fixed. Shared parameters are unchanged,
and every history-dependent operation uses only the current prefix.
LINEUP recomputes history-conditioned recall and code initialization at
each budget, then fits a new code using available support examples.
The full available prefix is used for history encoding.
Results are mean $\pm$ sample standard deviation over three seeds.
Basic is history-independent, so its full-history result is repeated across
budgets. Tables~\ref{tab:coldstart_lamp2n} and~\ref{tab:coldstart_lamp4}
report the complete results.

\paragraph{Generation results.}
Figure~\ref{fig:coldstart_efficiency_supp}(a--b) and
Table~\ref{tab:coldstart_lamp4} provide complementary results on LaMP-4.
LINEUP achieves the best mean ROUGE-1 and ROUGE-L at every history budget.
With one history record, it reaches $0.219/0.197$, compared with OPPU's
$0.215/0.191$ at the same budget. Several baselines also change little with
additional history; the key finding here is LINEUP's consistently stronger
generation performance under restricted history.

\paragraph{Baseline coverage.}
P2P and Per-Pcs are included in the full-history main comparison but not in
this controlled cold-start study, because our reproduced pipelines use
user representations precomputed from complete histories. We restrict this
study to baselines whose history budgets can be varied within the same
reproduced protocol. MTA is omitted on LaMP-2N because no complete released
configuration is available.

\subsection{Additional Efficiency and Scalability Results}
\label{app:efficiency}

\begin{table}[t]
\setlength{\tabcolsep}{4.2pt}
\renewcommand{\arraystretch}{1.12}

\caption{
Shared and per-user adaptation parameters and bf16 storage on LaMP-2N and LaMP-3. User columns report persistent state per user.
}
\label{tab:efficiency-accounting}
\begin{center}
\begin{tabular}{@{}lrrrr@{}}
\toprule
\textbf{Method}
& \shortstack{\textbf{Shared}\\\textbf{params}}
& \shortstack{\textbf{User}\\\textbf{params}}
& \shortstack{\textbf{Shared}\\\textbf{storage}}
& \shortstack{\textbf{User}\\\textbf{storage}} \\
\midrule

\multicolumn{5}{@{}l}{\textit{(a) LaMP-2N}} \\

OPPU
& 6.29\,M
& 4.19\,M
& 12.58\,MB
& 8.39\,MB \\

PerFit
& 1.18\,M
& 0.79\,M
& 2.36\,MB
& 1.57\,MB \\

P2P
& 307.66\,M
& 0
& 615.33\,MB
& 0 \\

\rowcolor{cyan!10}
\textbf{LINEUP}
& 75.61\,M
& 8
& 151.23\,MB
& 16\,B \\

\midrule

\multicolumn{5}{@{}l}{\textit{(b) LaMP-3}} \\

Per-Pcs
& 454.03\,M
& 8.39\,M
& 908.07\,MB
& 16.78\,MB \\

MTA
& 999.42\,M
& 9.99\,M
& 1{,}998.85\,MB
& 19.99\,MB \\

\rowcolor{cyan!10}
\textbf{LINEUP}
& 75.58\,M
& 8
& 151.16\,MB
& 16\,B \\

\bottomrule
\end{tabular}
\end{center}
\end{table}

\paragraph{Storage accounting.}
For the evaluated configurations, we analytically compute storage curves
from shared adaptation parameters $P_{\mathrm{shared}}$ and persistent
parameters per user $P_{\mathrm{user}}$ as
$S(N)=2(P_{\mathrm{shared}}+NP_{\mathrm{user}})$ bytes for $N$ users.
Storage is normalized to bf16, with 1\,MB $=10^6$ bytes and
1\,GB $=10^9$ bytes. Here $N$ is a deployment-population variable,
independent of the evaluated target-user cohort; shared parameters
and per-user state sizes remain fixed.
The per-user term counts parameter state retained for reuse, whether
fitted or assembled. We exclude the common frozen Llama-2-7B backbone
and auxiliary pretrained encoders. LINEUP's shared count includes its
task LoRA and fixed feature projections. P2P's hypernetwork is counted as shared state, while its query-generated
LoRAs are transient and excluded from the per-user term.
Per-Pcs retains an assembled personalized LoRA, and MTA retains
a trained stacked LoRA; reconstructable merged copies are not counted
separately. Table~\ref{tab:efficiency-accounting} summarizes the parameter
and storage counts.

\paragraph{Comparisons.}
The displayed curves begin at 100 users, reflecting the approximately
100 target users in most tasks under our evaluation protocol.
For LaMP-2N, LINEUP becomes more storage-efficient than OPPU at
17 users and PerFit at 95 users, while remaining below P2P's
615.33\,MB shared state over the displayed range.
At LaMP-2N's actual evaluation size of 49 target users, PerFit therefore
has lower total adaptation storage than LINEUP.
We additionally compare Per-Pcs and MTA using their complete LaMP-3
configurations (Figure~\ref{fig:coldstart_efficiency_supp}(c)).
At 1,024 users, the projected totals are approximately 151.18\,MB
for LINEUP, 18.09\,GB for Per-Pcs, and 22.47\,GB for MTA.
Thus, directly sharing personalization capacity avoids the substantial
per-user storage that remains when reusable adapter components are
assembled into large individual modules.

\subsection{Ablation Protocol}
\label{app:ablation_protocol}

We evaluate all variants on LaMP-2M and LaMP-5 over three seeds
and report mean $\pm$ standard deviation.
The two tasks cover different routing regimes:
LaMP-2M uses $M_c=M_a=4$, whereas LaMP-5 uses $M_c=16$ and $M_a=4$.
Query calibration therefore reweights recalled factors on LaMP-2M
and additionally selects the active subset on LaMP-5.

All variants inherit the corresponding task configuration, optimization
settings, data splits, evaluator, and training schedule without re-tuning.
The training horizon and reference checkpoint are 5,000 and 4,096 episodes
for LaMP-2M, and 500 and 400 for LaMP-5.
Full LINEUP and S1 only reuse the corresponding main-experiment runs;
the two routing ablations are trained under the same schedule as S1-only
and evaluated at the corresponding reference checkpoint.

\paragraph{Random candidates.}
We replace the history-guided candidate set with a uniformly sampled
subset of the same size:
\begin{equation}
\mathcal C_u^{\mathrm{rand}}
\sim
\operatorname{UniformSubset}\bigl(\{1,\ldots,K\},M_c\bigr).
\end{equation}
The subset is generated deterministically from the experimental seed
and user identifier, and remains fixed across that user's queries
and batches. The original score $z_{u,x}=s_u+\lambda W_Q\tilde e_x$
still determines active-factor selection and mixture weights within
this set. Thus, this control removes history-guided retrieval rather
than all history information.

\paragraph{No query calibration.}
We set $\lambda=0$, so $z_{u,x}=s_u$, throughout training and inference.
History-conditioned candidate recall, optimization settings, and all
remaining model components are unchanged.

\paragraph{Removing the user-specific correction.}
S1 only retains shared-factor composition and omits the entire S2
correction mechanism. Full LINEUP adds history-initialized, support-fitted
user-code correction to the same S1 model, so this comparison evaluates
the contribution of S2 as a whole.

\paragraph{Initialized code without support fitting.}
\emph{S1 + initialized code} uses the same shared checkpoints and
available user history as Full LINEUP for each seed. It retains S1 and
the shared correction maps, applying the initialized code
$c_u^{(0)}=G\tilde e_u$ across the user's queries without target-user
optimization. Comparison with S1-only evaluates the shared maps and
history-based initialization together; comparison with Full LINEUP
isolates the additional contribution of support fitting.

\begin{table}[t]
\setlength{\tabcolsep}{4.2pt}
\renewcommand{\arraystretch}{1.12}

\caption{
Sensitivity of LINEUP to user-code dimension $r_c$ on LaMP-2M and LaMP-5. \underline{Underline} and \textbf{bold} denote the second-best and best means, including ties; $\dagger$ marks the default configuration. Performance improves from $r_c=2$ to $8$, while larger codes yield no consistent gains across metrics.
}
\label{tab:code_dimension_sensitivity}
\begin{center}
\begin{tabular}{@{}ccccc@{}}
\toprule
& \multicolumn{2}{c}{\textbf{LaMP-2M}}
& \multicolumn{2}{c}{\textbf{LaMP-5}} \\
\cmidrule(lr){2-3}
\cmidrule(lr){4-5}

$\boldsymbol{r_c}$
& \textbf{Acc} $\uparrow$
& \textbf{F1} $\uparrow$
& \textbf{R-1} $\uparrow$
& \textbf{R-L} $\uparrow$ \\
\midrule

$2$
& $.633 \pm .002$
& $.530 \pm .002$
& $.551 \pm .003$
& $.486 \pm .001$ \\

$4$
& $.641 \pm .002$
& $.535 \pm .002$
& $\underline{.563} \pm .002$
& $\underline{.491} \pm .001$ \\

\rowcolor{cyan!10}
$8^{\dagger}$
& $.645 \pm .003$
& $\underline{.539} \pm .002$
& $\mathbf{.564} \pm .003$
& $\mathbf{.494} \pm .004$ \\

$16$
& $\underline{.647} \pm .004$
& $\mathbf{.541} \pm .001$
& $\mathbf{.564} \pm .002$
& $\mathbf{.494} \pm .002$ \\

$32$
& $\mathbf{.653} \pm .006$
& $.538 \pm .003$
& $.561 \pm .002$
& $.490 \pm .002$ \\

\bottomrule
\end{tabular}
\end{center}
\end{table}

\subsection{User-Code Dimension Sensitivity}
\label{app:code_dimension_sensitivity}

\paragraph{Protocol.}
We sweep $r_c\in\{2,4,8,16,32\}$ on LaMP-2M and LaMP-5, following
the data splits and evaluation protocol in
Appendix~\ref{app:ablation_protocol}.
For each task and seed, the task-adapted backbone and S1 checkpoint
are fixed across dimensions. For each new dimension, we retrain
the shared correction maps $\{V_\ell\}$ and history initializer $G$;
$r_c=8$ reuses the existing Full LINEUP runs.
All remaining settings are unchanged, including S2 training budgets,
checkpoint selection, support construction, learning rates, and
regularization. Each target user initializes one code from available
history and fits only this code for ten Adam steps, keeping all shared
parameters fixed.

\paragraph{Results.}
Table~\ref{tab:code_dimension_sensitivity} shows that increasing $r_c$
from 2 to 8 improves mean performance on all four metrics.
Moving from 8 to 16 raises both LaMP-2M means by $0.002$, while
LaMP-5 scores remain unchanged at the reported precision.
At $r_c=32$, LaMP-2M Accuracy reaches its highest mean, but Macro-F1
and both ROUGE means decrease relative to $r_c=16$.
Larger codes therefore do not yield consistent improvements across
metrics. The default $r_c=8$ provides an effective compact configuration,
matching the best reported generation means and remaining close to
$r_c=16$ on classification.

\clearpage

\begin{table}[t]
\small
\setlength{\tabcolsep}{2.5pt}
\renewcommand{\arraystretch}{1.12}

\caption{
Cold-start performance on LaMP-2N across history budgets.
\textbf{Bold} denotes the best reported mean for each metric and budget.
}
\label{tab:coldstart_lamp2n}
\begin{center}
\begin{tabular}{@{}lccccccc@{}}
\toprule
\textbf{Method}
& $\mathbf{k=1}$
& $\mathbf{2}$
& $\mathbf{4}$
& $\mathbf{8}$
& $\mathbf{16}$
& $\mathbf{32}$
& \textbf{Full} \\
\midrule

\multicolumn{8}{@{}l}{\textbf{Acc} $\uparrow$} \\

Basic
& $.783 \pm .002$
& $.783 \pm .002$
& $.783 \pm .002$
& $.783 \pm .002$
& $.783 \pm .002$
& $.783 \pm .002$
& $.783 \pm .002$ \\

OPPU
& $.793 \pm .002$
& $.793 \pm .002$
& $.793 \pm .002$
& $.793 \pm .002$
& $.793 \pm .002$
& $.793 \pm .002$
& $.801 \pm .001$ \\

PerFit
& $.679 \pm .000$
& $.711 \pm .001$
& $.710 \pm .001$
& $.713 \pm .001$
& $.722 \pm .001$
& $.724 \pm .001$
& $.818 \pm .000$ \\

RAG
& $.750 \pm .001$
& $.756 \pm .001$
& $.768 \pm .001$
& $.770 \pm .001$
& $.778 \pm .001$
& $.788 \pm .001$
& $.817 \pm .001$ \\

\rowcolor{cyan!10}
\textbf{LINEUP}
& $\mathbf{.806} \pm .006$
& $\mathbf{.807} \pm .007$
& $\mathbf{.807} \pm .005$
& $\mathbf{.807} \pm .005$
& $\mathbf{.806} \pm .005$
& $\mathbf{.813} \pm .005$
& $\mathbf{.822} \pm .005$ \\

\midrule

\multicolumn{8}{@{}l}{\textbf{F1} $\uparrow$} \\

Basic
& $.536 \pm .002$
& $.536 \pm .002$
& $.536 \pm .002$
& $.536 \pm .002$
& $.536 \pm .002$
& $.536 \pm .002$
& $.536 \pm .002$ \\

OPPU
& $.536 \pm .002$
& $.536 \pm .002$
& $.536 \pm .002$
& $.536 \pm .002$
& $.537 \pm .002$
& $.537 \pm .003$
& $.556 \pm .002$ \\

PerFit
& $.457 \pm .002$
& $.484 \pm .001$
& $.489 \pm .002$
& $.510 \pm .001$
& $.514 \pm .001$
& $.518 \pm .002$
& $.589 \pm .002$ \\

RAG
& $.507 \pm .002$
& $.514 \pm .003$
& $.537 \pm .004$
& $.554 \pm .004$
& $.571 \pm .003$
& $\mathbf{.588} \pm .001$
& $.594 \pm .001$ \\

\rowcolor{cyan!10}
\textbf{LINEUP}
& $\mathbf{.560} \pm .005$
& $\mathbf{.564} \pm .005$
& $\mathbf{.575} \pm .004$
& $\mathbf{.585} \pm .004$
& $\mathbf{.584} \pm .003$
& $\mathbf{.588} \pm .003$
& $\mathbf{.621} \pm .003$ \\

\bottomrule
\end{tabular}
\end{center}
\end{table}

\begin{table}[t]
\small
\setlength{\tabcolsep}{2.5pt}
\renewcommand{\arraystretch}{1.12}

\caption{
Cold-start performance on LaMP-4 across history budgets.
\textbf{Bold} denotes the best reported mean for each metric and budget.}
\label{tab:coldstart_lamp4}
\begin{center}
\begin{tabular}{@{}lccccccc@{}}
\toprule
\textbf{Method}
& $\mathbf{k=1}$
& $\mathbf{2}$
& $\mathbf{4}$
& $\mathbf{8}$
& $\mathbf{16}$
& $\mathbf{32}$
& \textbf{Full} \\
\midrule

\multicolumn{8}{@{}l}{\textbf{R-1} $\uparrow$} \\

Basic
& $.186 \pm .000$
& $.186 \pm .000$
& $.186 \pm .000$
& $.186 \pm .000$
& $.186 \pm .000$
& $.186 \pm .000$
& $.186 \pm .000$ \\

OPPU
& $.215 \pm .001$
& $.215 \pm .000$
& $.215 \pm .000$
& $.215 \pm .000$
& $.215 \pm .000$
& $.215 \pm .001$
& $.216 \pm .000$ \\

PerFit
& $.202 \pm .001$
& $.202 \pm .001$
& $.202 \pm .000$
& $.203 \pm .001$
& $.203 \pm .001$
& $.203 \pm .001$
& $.207 \pm .001$ \\

RAG
& $.192 \pm .001$
& $.192 \pm .002$
& $.192 \pm .001$
& $.192 \pm .000$
& $.192 \pm .001$
& $.193 \pm .001$
& $.198 \pm .001$ \\

MTA
& $.197 \pm .000$
& $.198 \pm .000$
& $.200 \pm .000$
& $.200 \pm .000$
& $.202 \pm .000$
& $.202 \pm .001$
& $.218 \pm .001$ \\

\rowcolor{cyan!10}
\textbf{LINEUP}
& $\mathbf{.219} \pm .001$
& $\mathbf{.219} \pm .000$
& $\mathbf{.221} \pm .000$
& $\mathbf{.222} \pm .001$
& $\mathbf{.222} \pm .001$
& $\mathbf{.222} \pm .000$
& $\mathbf{.222} \pm .001$ \\

\midrule

\multicolumn{8}{@{}l}{\textbf{R-L} $\uparrow$} \\

Basic
& $.161 \pm .001$
& $.161 \pm .001$
& $.161 \pm .001$
& $.161 \pm .001$
& $.161 \pm .001$
& $.161 \pm .001$
& $.161 \pm .001$ \\

OPPU
& $.191 \pm .001$
& $.191 \pm .001$
& $.191 \pm .001$
& $.191 \pm .001$
& $.191 \pm .001$
& $.192 \pm .001$
& $.192 \pm .001$ \\

PerFit
& $.179 \pm .000$
& $.179 \pm .001$
& $.179 \pm .000$
& $.180 \pm .000$
& $.180 \pm .001$
& $.181 \pm .001$
& $.183 \pm .001$ \\

RAG
& $.170 \pm .001$
& $.170 \pm .001$
& $.170 \pm .000$
& $.170 \pm .001$
& $.170 \pm .001$
& $.170 \pm .001$
& $.175 \pm .000$ \\

MTA
& $.176 \pm .000$
& $.177 \pm .000$
& $.179 \pm .000$
& $.180 \pm .000$
& $.181 \pm .000$
& $.181 \pm .000$
& $.195 \pm .000$ \\

\rowcolor{cyan!10}
\textbf{LINEUP}
& $\mathbf{.197} \pm .000$
& $\mathbf{.197} \pm .000$
& $\mathbf{.197} \pm .001$
& $\mathbf{.197} \pm .001$
& $\mathbf{.197} \pm .001$
& $\mathbf{.197} \pm .001$
& $\mathbf{.197} \pm .001$ \\

\bottomrule
\end{tabular}
\end{center}
\end{table}